\documentclass{article} 

\usepackage{geometry}
\usepackage{hyperref}
\usepackage{xurl}

\usepackage{amsmath,amssymb,mathtools,bm}
\usepackage{booktabs,tabularx,array,longtable}
\usepackage{microtype}
\usepackage{xcolor}
\usepackage{algorithm}
\usepackage{algpseudocode}
\usepackage{amsthm}
\usepackage{graphicx}
\usepackage{float}
\usepackage{comment}

\theoremstyle{plain}
\newtheorem{theorem}{Theorem}[section]
\newtheorem{lemma}[theorem]{Lemma}

\newtheorem{definition}[theorem]{Definition}
\newtheorem{assumption}[theorem]{Assumption}
\newtheorem{example}[theorem]{Example}

\newtheoremstyle{italicremark}
  {\topsep}{\topsep}{\itshape}{}{\itshape}{.}{.5em}{}
\theoremstyle{italicremark}
\newtheorem{remark}[theorem]{Remark}
\newtheorem*{remark*}{Remark}

\newcommand{\E}{\mathbb{E}}

\newcommand{\dd}{\,\mathrm{d}}

\title{When Should a Human Take Back Control?\\
Optimal Delegation under Turbulent AI Risk}

\author{Haoze~\textsc{Yan}\thanks{Department of Industrial Engineering and Operations Research, UC Berkeley, United States. \texttt{haoze.yan@berkeley.edu}},\;
Julien~\textsc{Roze}\thanks{Stoa, United States. \texttt{jroze@stoa.insure}},\;
Ved~\textsc{Upadhyay}\thanks{Stoa, United States. \texttt{ved@stoa.insure}},\;
Unal~\textsc{Tatar}\thanks{Department of Cybersecurity, University at Albany, SUNY, United States. \texttt{utatar@albany.edu}},\;
Thibaut~\textsc{Mastrolia}\thanks{Department of Industrial Engineering and Operations Research, UC Berkeley, United States. \texttt{mastrolia@berkeley.edu}}
}

\begin{document}

\maketitle

\begin{abstract}

Deploying AI systems requires deciding when to delegate tasks and when humans should intervene to monitor and mitigate risk induced by AI operations. These decisions become particularly challenging when failures cluster: a hallucination or harmful output can trigger further errors, creating periods of elevated risk. We introduce a continuous-time framework for learning adaptive human oversight under such turbulent AI risk. Existing oversight and delegation formulations condition on history but do not model incident clustering, or its suppression by supervision effort, jointly with the delegation decision and this study fixes this gap. The self-exciting dynamics capture how risk events increase the likelihood of subsequent events, making their timing and history central to decision-making. We formulate a stochastic control problem that combines human actions, monitoring effort, and switching between human–AI-assisted operation and full AI delegation, balancing operational rewards against oversight costs, and cascading AI-failures and induced uncertainty. Human participation is therefore an endogenous component of risk management: the policy determines both when oversight is needed and how much effort to allocate. We study a relaxed switching formulation and propose Hawkes-PPO, a policy-gradient method that uses a bank of exponential filters of observed incident times as a finite-dimensional summary of the history. In a synthetic environment it attains a higher risk-adjusted objective than either fixed regime and approaches an approximate full-information oracle. We illustrate our results with numerical simulations by examining how cascade risks influence intervention and delegation, connecting reinforcement learning with adaptive human oversight of AI systems. In particular, we illustrate the benefit of our switching strategy and Hawkes-PPO algorithm to monitor the project efficiently along time, reducing turbulent risks occurrences and costs.

\end{abstract}

 \textbf{Keywords}: AI Safety, AI Hallucination, AI risk modeling, Reinforcement Learning, Hawkes-PPO, Stochastic Control

\section{Introduction}
\label{sec:intro}

Language models are increasingly deployed as agents that retrieve information, plan, invoke tools, and act on the results over extended horizons. This shift makes the reliability of extended agent execution an increasingly important safety concern and revives the problem of scalable oversight \cite{engels2025scaling}: how a limited and costly human supervisor should allocate attention to a system that is cheaper, faster, and possibly more capable, but imperfectly reliable. The difficulty is that agent failures can be dependent across successive steps. A coding agent that misreads a failing test edits the wrong file, and the misleading output of that edit degrades its next diagnosis. For a single evolving task, the operational question is:
\begin{quote}\itshape
When should a costly but reliable human controller delegate a continuously evolving project to a cheaper black-box AI controller, and when should the human take control back because accumulated incident risk has become too large?
\end{quote}

\paragraph{\textbf {Turbulent AI risk.}}

Recent controlled experiments provide evidence that earlier errors can increase the likelihood of subsequent errors: an early incorrect claim can be reused as a premise and induce further incorrect claims \cite{zhang2024snowball}, and injecting erroneous prior steps into a model's context lowers its accuracy on later steps, an effect that persisted across model sizes and was absent in the tested thinking-enabled variants \cite{sinha2026illusion}. Execution traces suggest cross-excitation mechanisms, where a hallucinated state assessment induces an incorrect action that alters the task history \cite{yang2024sweagent,cemri2025mast,liu2026agenthallubenchmarkingautomatedhallucination}.

An erroneous output thus contaminates the information the agent uses next, so past incidents raise the likelihood of further incidents, of the same type (self-excitation) or of another type (cross-excitation); effective supervision can weaken this dependence; and supervision is costly. We use the term turbulent AI risk for incident dynamics with this incident-driven dependence across time: cascade of self-excited incidents, propagating into a cyber-system while raising uncertainty of the outcome. 

\paragraph{Memory-dependent and cascade modeling.}
The empirical evidences mentioned above advocate for avoiding the use of memoryless processes to model AI risk and incidents, like Poisson processes, rather focusing on self-excited processes. We therefore represent incidents by a self- and mutually exciting Hawkes intensity \cite{hawkes1971} which encompasses the memory-dependent impact and cascade AI risk and we treat this specification as a falsifiable reduced-form hypothesis. That the influence of a past incident decays, and that supervision weakens propagation and cascade effect, are further assumptions. The closest support for this assumption is that external feedback, including human-written feedback, improved code repair in the settings evaluated by \cite{olausson2024selfrepair}, with gains depending on feedback quality, whereas intrinsic self-correction did not improve reasoning accuracy in \cite{huang2024large}.

The model adds two simplifying assumptions that are not defining properties. First, it holds the severity distribution of each incident fixed, so excitation changes how often incidents occur rather than how large they are, although expected losses still rise with intensity. Then, it assumes an additional disturbance to the agent's output by increasing uncertainty and volatility with incident intensity, so that active periods make outcomes less predictable beyond their direct losses.  For example, \cite{hao2024quantifying} shows that hallucinated content can propagate through a system and its spread is itself uncertain. We study observable, history-dependent incidents whose propagation can be reduced through verification and corrective supervision, with state and action turbulence (e.g. hallucination) in the broad agentic sense of Section~\ref{sec:turbulenceHawkes} as the worked formulation; the framework optimizes the response to incident signals available to the supervisor, not their detection.

Deployed systems show the decision a supervisor faces. In July 2025, a Replit coding agent deleted a production database despite repeated instructions restricting changes, then incorrectly reported that recovery was impossible, after earlier fabricating data and test results.\footnote{See \url{https://www.saastr.com/replits-new-release-address-most-of-the-challenges-we-hit-vibe-coding-but-is-prosumer-vibe-coding-really-ready-for-commercial-apps-yet/}} In Anthropic's month-long trial of a model running a small shop, a fabricated interaction was followed by related confabulations, human corrections, and eventual recovery, with neither trigger nor recovery explained \cite{anthropic2025vend}. These accounts motivate the delegation problem but do not establish self-excitation or intervention effectiveness; they show that earlier incidents are potential warning signals that could prompt a reassessment of delegation, the decision our framework formalizes. Permissions and environment separation remain the first line of defense; adaptive supervision addresses the residual decisions they leave open.

Existing oversight protocols can condition on history but do not model incident clustering when deciding how much to delegate \cite{greenblatt2024control,bhatt2025ctrlz}, and the sequential delegation formulations reviewed in Section~\ref{sec:related} below do not represent explicit incident-history excitation, or its effort-dependent suppression, jointly with the delegation decision. Modeling both matters because supervision changes future risk by preventing incidents and their downstream effects, so incident history bears on how long to supervise and when to delegate again; this is the mechanism the framework studies. We formulate the problem as continuous-time stochastic control with two regimes: a more reliable but costly human-assisted regime, in which the human acts on the project and chooses a verification effort that weakens propagation, and autonomous execution by a frozen black-box AI policy that is not optimized by the manager and ignores the turbulence it generates. Delegation and takeover are relaxed switching decisions in which the human controls transition rates rather than transition times, with switching costs that induce hysteresis.

\paragraph{Contributions.}
(i) A supervisory control formulation in which delegation, takeover, project action, and verification effort are chosen jointly, under incident dynamics that the effort itself shapes, alongside a frozen autonomous controller with different information and objectives, and under a criterion that penalizes realized quadratic variation together with operating and switching costs. (ii) Hawkes-PPO, an adaptation of PPO in which a bank of exponential filters of the observed incident times provides a finite-dimensional approximation that makes policy-gradient learning implementable (Appendix~\ref{app:markovian-approximation}); the policy is learned from states, rewards, and incident history, without being supplied the kernel or the model parameters. (iii) A numerical comparison, in a synthetic environment with model-generated incidents, of the objective, project-quality, and volatility tradeoffs achieved by learned switching against fixed regimes and an approximate full-information oracle.

From matching starts, our Hawkes-PPO learned policy improves on both fixed regimes by point estimates, closing significantly the pure-AI-to-oracle gap from an AI start and  the pure-human-to-oracle gap from a human start as a balance between those two extreme modes (Appendix \ref{app:other-rl}, Table \ref{app:all-values}). The pure-human regime attains the lowest volatility and the highest terminal quality at the highest operating cost, the pure-AI regime the reverse, and the learned policy lies between them. We provide a comparative-statics study shows how the learned switching policy, and the share of the horizon spent under human control, respond to excitation strength, human operating cost and the switching fee.

Section~\ref{sec:related} positions the work, Section~\ref{sec:model} introduces the model, Section~\ref{sec:numerical-ai} the numerical study, and Section~\ref{sec:conclusion} concludes.

% =====================================================================
\section{Related Work}
\label{sec:related}

\paragraph{Failure propagation in language-model agents.}
Beyond the snowballing and self-conditioning experiments cited above \cite{zhang2024snowball,sinha2026illusion}, studies of self-repair report that a model's own feedback yields modest and variable gains in code repair and that stronger external feedback, including human-written feedback, yields larger ones \cite{olausson2024selfrepair}, while intrinsic self-correction can fail to improve or even degrade reasoning accuracy \cite{huang2024large}. Agent--computer-interface work documents cascading edit errors that simple guardrails interrupt \cite{yang2024sweagent}. At the system level, failure taxonomies for multi-agent systems \cite{cemri2025mast} and step-level attribution of agentic hallucinations \cite{liu2026agenthallubenchmarkingautomatedhallucination} classify where incidents originate and how they transmit, while \cite{jamshidi2026hallucination} find that hallucinations attenuate across the multi-agent cascades they evaluate, at the price of factual-information loss. Compounding errors along a trajectory have a theoretical antecedent in imitation learning \cite{ross2011reduction}. These studies establish or observe propagation, and some evaluate interventions such as feedback or guardrails; none jointly optimizes sustained delegation and costly mitigation effort under explicit incident-history excitation.

\paragraph{Agent safety evaluations and control protocols.}
Benchmarks show agents behaving unsafely with operational tools under benign instructions \cite{vijayvargiya2026openagentsafety} and executing harmful tasks when jailbroken \cite{andriushchenko2025agentharm}; the latter is an adversarial threat model outside our scope. Control protocols combine trusted monitoring with auditing \cite{greenblatt2024control} or screen actions by resampling \cite{bhatt2025ctrlz}, and scaling analyses ask how oversight fares as the overseen system grows more capable \cite{engels2025scaling}. These protocols can condition on history, but the level of oversight is a protocol parameter rather than a quantity optimized against observed incident dynamics. 

\paragraph{Human--AI delegation.}
Learning to defer decides, per instance, whether a model or a human should act \cite{madras2018predict,mozannar2020consistent}. Closer to our setting, \cite{fuchs2024delegation} train a manager that allocates control of a sequential task between human and autonomous agents, and \cite{lykouris2024learning} study deferral when human review decisions interact through congestion and delayed feedback. In these formulations errors and interventions can change the state, but none represents incident-history excitation explicitly or lets supervision effort suppress it; our problem adds both and lets the human choose how much effort to spend while holding control.

\paragraph{Cyber risk management.}
Risk induced by AI systems is increasingly examined with the concepts of established risk analysis \cite{thekdi2023disaster}, and its structure is particularly close to that of cyber risk management. The foundational article of \cite{gordon2002economics} posed the problem of optimal resource allocation under cyber threats, later extended to stochastic control \cite{callegaro2025stochastic,mastrolia2025agency}. At the enterprise level, quantification frameworks make this cost--benefit trade-off operational by coupling attack likelihood with the propagation of impact through functional
dependencies to the services and business processes that rely on compromised assets \cite{tatar2020quantification}. Contagion between cyber incidents is empirically documented \cite{baldwin2017contagion}, and Hawkes and contagion
models have entered the cyber-insurance literature relatively recently \cite{bessy2021multivariate,hernandez2026cyber,cherkaoui2026stress}, with
\cite{callegaro2025stochastic} bringing clustered attacks into the control problem. Note that these works do not consider turbulent event modeling in the volatility of the risky project, unlike this study.

\paragraph{Optimal switching and Hawkes-driven control.}
Impulse control has been applied to cyber-risk management by \cite{hillairet2026optimal} without incident clustering or a learning method. Relaxed (randomized) controls appear in continuous-time reinforcement learning in \cite{wang2020reinforcement} and in principal-agent contracting under moral hazard with project delegation in \cite{krvsek2023randomisation}, in both cases for continuous controls rather than regime switching. Relaxed switching, in which the controller sets the intensity of regime changes rather than their times, goes back to \cite{bouchard2009stochastic,elie2014bsde} and has been developed for policy-gradient learning by \cite{denkert2025control}.  What we add is the supervisory setting: two operating regimes with different controls and information, a frozen autonomous controller with its own objective, a project diffusion whose volatility depends on incident intensity. We extend the continuous-time reinforcement learning for controlled Hawkes jump-diffusions, with a control-dependent excitation kernel and Markovianization by exponential filter banks, developed by \cite{bielecki2026continuoustimereinforcementlearningcontrolled}to switching problem and turbulent AI-risks, see  Appendix \ref{app:markovian-approximation}, and a quadratic-variation criterion with switching costs. Note that the use of self-exciting activity to generate volatility has also been studied in \cite{horstxu2022} for financial models in a different framework; empirically fitted self-excitation among interacting automated agents in finance \cite{bacry2015hawkes,jaisson2015limit} shows that the reduced form is estimable, not that it describes language models.

\section{AI-Human Risk Model and Optimization}
\label{sec:model}

\subsection{Project State, Information, and Control Regime}
\label{sec:state}
We consider a finite time horizon $[0,T]$, with $T>0$, and a filtered
probability space $(\Omega,\mathcal F,\mathbb F,\mathbb P)$, with a
filtration $\mathbb F=(\mathcal F_t)_{t\in[0,T]}$ satisfying the usual
conditions. The filtration $\mathbb F$ represents all information available
to the project manager, including the current project state, past actions, AI
outputs, and previously generated turbulent incidents. The state process
$X$ is assumed to take values in $\mathbb R$ and denotes the state, or
quality, of a continuously evolving project at time $t\in[0,T]$. A larger
value of $X_t$ is interpreted as a more desirable project state. At any time,
execution is assigned either to a human decision maker or to an AI system. We
introduce the regime process
$I_t\in\{0,1\},$
where $I_t=0$ denotes human AI-assisted control and $I_t=1$ denotes full AI
delegation control. We assume that $I$ is an adapted c\`adl\`ag process. When
$I_t=0$, the project is under human control and the human chooses admissible
actions $(a_t,e_t)\in\mathcal A\times \mathcal E$, where $\mathcal A$ is the
admissible decision space valued in  a compact set $A\subset\mathbb R$ to work on the project $X$ and $\mathcal E$ is the
admissible effort valued in the compact $E\subset\mathbb R$ space to monitor the assisted AI work.

The AI action is generated according to $\widehat a_t$
incorporating standardized information supplied to the AI, prompt
template, decoding rule, available tools, retrieval procedure, and
context-management mechanism recommended by the AI. The design is explained in Section \ref{sec:blackbox}. It
is not optimized by the human decision maker.

The effective action/effort applied to the project is therefore
\[
(a_t,e_t), \text{ if } I_t=0,\text{ or } (\widehat a_t,\varepsilon), \text{ if } I_t=1,
\]
where $1\gg\varepsilon\geq 0$ represents some automatic safeguards under
full AI delegation, possibly $0$. The human decision maker may also choose
when to delegate or reclaim control. A switching strategy is represented by a
sequence $\xi = \bigl((\tau_n,\iota_n)\bigr)_{n\geq1},$
where $\tau_n$ are increasing $\mathbb F$-stopping times and $\iota_n\in\{0,1\}$
denotes the regime entered at time $\tau_n$. We restrict attention to
admissible switching policies for which the number of switches is almost
surely finite on $[0,T]$. Hence the human control problem will eventually consist of two decisions:
continuous control under human supervision and switching
decisions determining when work is delegated to, or reclaimed from, the AI.

\subsection{Cascade AI-Turbulence Modeling}
\label{sec:turbulenceHawkes}
A turbulent incident, \textit{e.g.} hallucination, is defined relative to the information and evidence
that are admissible for the project at the time at which the AI produces an
output. At the level of generated text, this definition is closely related to
the atomic-fact perspective of FActScore \cite{min2023factscore}, which
decomposes a generation into factual claims and evaluates whether they are
supported by a reliable information source. Similarly, RAGAS evaluates the
faithfulness of generated claims relative to the context provided to the
model \cite{es-etal-2024-ragas}. In an AI-assisted project, the AI does not merely generate text. It
may retrieve project information, infer the current state, reason about
dependencies, construct plans, call tools, communicate with humans, and
execute actions. Incidents may therefore arise at intermediate stages of
an agent trajectory and propagate to subsequent steps. This broader
perspective is consistent with recent work on agentic hallucination attribution
\cite{liu2026agenthallubenchmarkingautomatedhallucination}.

\begin{definition}[State and action turbulence]
\label{def:hallucination-types}

    A state turbulence occurs when the AI forms, reports, retrieves, or
    propagates an incorrect representation of the project state or its
    environment. It is represented by a counting process $N^S=(N_t^S)_{t\geq 0}$.

    An action turbulence occurs when the AI proposes, invokes, or reports
    an action that is inconsistent with its actual capabilities or with the
    admissible action space. It is represented by a counting process
    $N^A=(N_t^A)_{t\geq 0}$.

\end{definition}

The jumps of $N^S$ and $N^A$ represent the occurrence times of turbulence
events rather than necessarily their detection times. We assume that $N^S$
and $N^A$ are simple, adapted counting processes. A central feature of AI-assisted projects is that turbulence are not
independent isolated errors. An erroneous output can become part of the
context used for subsequent decisions, thereby increasing the likelihood of
additional errors. This  generates the cascade from state incident to action incident to state contamination to further incidents.
We refer to this endogenous propagation mechanism as a cascade of turbulent incidents and modeled by two-dimensional counting
process $\mathbf N=(N^S,N^A)^\top$. Let
$
\boldsymbol\Phi$
be a measurable matrix-valued kernel with nonneg\-ative entries. We assume
that each $\phi_{ij}$ is locally integrable. We use the convention that the
first index denotes the type of the future turbulence and the
second index denotes the type of the triggering turbulence. Thus,
$\phi_{ij}$ measures the effect of a type-$j$ turbulence on the future
occurrence rate of type-$i$ turbulence, for example $\phi_{SS}$ and $\phi_{AA}$ model self-excitation for states and actions, whereas $\phi_{AS}$
and $\phi_{SA}$ model cross-excitation: state turbulence to future action turbulence and
action turbulence to future state turbulence, respectively. We denote by $\boldsymbol\lambda_t=(\lambda_t^S,\lambda_t^A)^\top$ the intensity of $\mathbf N$ defined by
\begin{equation*}
\boldsymbol\lambda_t
=\boldsymbol\mu_t(\bar e_t)
+
\int_{(0,t)}
\boldsymbol\Phi(t-s,\bar e_t)\,d\mathbf N_s,
\end{equation*}
where $\boldsymbol\mu_t=(\mu_t^S,\mu_t^A)^\top$ is the nonneg\-ative
$\mathbb F$-predictable baseline intensity process of
$\boldsymbol\lambda_t$ and $\bar e_t:=\bar e(i,\pi)=(1-I_{t-})e_t+I_{t-}\varepsilon$.

\begin{remark}
The effort of the human is to reduce the turbulences by monitoring the AI
assisted work. Therefore, $\mu^S,\mu^A,\phi$ are nonincreasing with respect
to the effort $e$.
\end{remark}

\begin{example}[Exponential kernels]
\label{ex:exponential-hawkes}
Let $\beta_S,\beta_A>0$ be state and action incidents decays rate. Consider the
kernel
\[
\boldsymbol{\Phi}(r,e)
=
\begin{pmatrix}
Q_{SS}(e)e^{-\beta_S r} & Q_{SA}(e)e^{-\beta_A r}\\[0.2cm]
Q_{AS}(e)e^{-\beta_S r} & Q_{AA}(e)e^{-\beta_A r}
\end{pmatrix},
\quad r\geq 0, e\in E,
\]
where $Q_{ij}:\mathcal E\to\mathbb R_+$. Define $Z_t^S:=\int_{(0,t)}e^{-\beta_S(t-s)}\,dN_s^S,$ and $Z_t^A:=\int_{(0,t)}e^{-\beta_A(t-s)}\,dN_s^A.$
Then,
\[
\lambda_t^S = \mu_t^S(\bar e_t) + Q_{SS}(e_t)Z_{t-}^S + Q_{SA}(\bar e_t)Z_{t-}^A,\quad \lambda_t^A = \mu_t^A(\bar e_t) + Q_{AS}(\bar e_t)Z_{t-}^S + Q_{AA}(\bar e_t)Z_{t-}^A.\]
This example is purely informative at this stage, in our model we assume that the kernel $\boldsymbol\Phi$ is unknown to use Hawkes-PPO algorithm described below, compared with a fully non-Markovian power law kernel and Oracle model that is approached with a mixture of exponential kernels following the Markovianization method proposed in \cite{khabou2025markov,bielecki2026continuoustimereinforcementlearningcontrolled}
\end{example}

\subsection{Relaxed Formulation and Project Dynamics}
\label{sec:relax}
The classical switching formulation involves a controller directly choosing a
sequence of switching times and paying the lump-sum cost $\chi_{ij}$ when the
operating regime changes from $i$ to $j$. We instead introduce a relaxed
formulation in which the controller does not directly choose the switching
times, but controls their instantaneous arrival rates
\cite{bouchard2009stochastic,denkert2025control}. Let $J=(J_t)_{t\in[0,T]}$ be
a simple counting process whose predictable intensity $\nu$ is controlled by
the decision maker, so that $\widetilde J_t:=J_t-\int_0^t\nu_s\,ds$ is an
$\mathbb F$-local martingale. Since $I_t\in\{0,1\}$, every jump of $J$
reverses the current regime $dI_t=(1-2I_{t-})\,dJ_t,$ and $
\nu_t=(1-I_{t-})\nu_t^{01}+I_{t-}\nu_t^{10},$
where $\nu^{01}$ (resp.\ $\nu^{10}$) is the intensity of switching from
regime $0$ to $1$ (resp.\ $1$ to $0$). The set of admissible relaxed
switching controls is
\[
\mathcal V:=\bigl\{\nu=(\nu^{01},\nu^{10}) :
\nu^{01},\nu^{10}\ \text{are $\mathbb F$-predictable and }
0\leq \nu_t^{ij}\leq \overline\nu<+\infty\bigr\}.
\]

\begin{definition}[Relaxed control]
An admissible relaxed control is a triple $\pi=(a,e,\nu)\in\mathcal
A\times\mathcal E\times \mathcal V$ such that
$\mathbb E\bigl[\int_0^T \nu_t^2\,dt\bigr]<+\infty$. We denote this space by $\mathfrak U$.
\end{definition}

Let $W^\circ$ and $W^{\mathrm{AI}}$ be independent Brownian motions. The
project dynamics are
\begin{align*}
\dd X_t
={}&
\Bigl[(1-I_t)b(t,X_t,a_t)+I_t b(t,X_t,\widehat a_t)\Bigr]\dd t+\Bigl[(1-I_t)\sigma(t,X_t,a_t)+I_t\sigma(t,X_t,\widehat a_t)\Bigr]\dd W_t^\circ\\
&+\Bigl[(1-I_t)\gamma(t,X_t,e_t,\boldsymbol{\lambda}_t)
+I_t\gamma(t,X_t,\varepsilon,\boldsymbol{\lambda}_t)\Bigr]\dd W_t^{\mathrm{AI}}- M_t^S dN^S_t,
\end{align*}

where $b,\sigma,\gamma$ satisfy standard Lipschitz and growth conditions and $M^S$ is a random variable independent of $W^\circ,W^{\rm AI},N^S,N^A$ representing the size of the incident reducing the project value. The
turbulence-induced volatility $\gamma$ is nondecreasing in
$\boldsymbol\lambda$ (higher turbulence activity increases project
uncertainty) and nonincreasing in $e$ (greater human monitoring reduces
uncertainty). In the human-assisted regime the AI still produces outputs that enter the project, so incidents still occur and carry losses; effort lowers their frequency, not their severity. 

\subsection{Objective}

The performance criterion associated with $\pi\in \mathfrak U$ starting in state $i\in\{0,1\}$
is
\begin{equation}\label{eq:value}
V(i)=\sup_{\pi\in\mathfrak U}
\mathbb E^{\pi}
\Big[
g(X_T)
+\int_0^T
\Bigl(f(s,X_s)-C_s-\frac{\kappa(I_{s-})}{2}\nu_s^2
\Bigr)ds-\frac{\eta}{2}[X]_T
-\int_0^T \chi_{I_{s-},1-I_{s-}}\,dJ_s
\Big],
\end{equation}
where $g$ is the terminal reward, $f$ the running reward, $C_s$ the
regime-dependent operating cost, $\eta>0$ a risk-sensitivity parameter, and
$\chi_{i,1-i}>0$ the cost of switching from regime $i$ to regime $1-i$. $[X]_T$ denotes the quadratic variation of the process $X$. Unlike the variance which model the deviation from the mean, this incorporates the risk related to high variations on $X$, which is more relevant risk aversion cost in the context of turbulent risk. In practice the controller faces a trade-off between the relatively high cost
of human effort ($c_0>c_1$ typically) and the lower operating cost of AI
delegation. AI delegation introduces turbulence risk penalized through
$\eta$, while positive switching costs rule out costless high-frequency
switching and generate a hysteresis effect. The solution to \eqref{eq:value} relies on a Markovian procedure initially introduced in \cite{khabou2025markov,bielecki2026continuoustimereinforcementlearningcontrolled} extended to our turbulent volatility model. The main idea is to approach the unknown kernel $\Phi$ with a mixture of exponential kernels as in Example \ref{ex:exponential-hawkes} so that the value function associated with this approximation converges to the primal value $V$. This reduction enables us to reduce the problem to an integro-partial HJB equation. The details are provided in Appendix \ref{app:markovian-approximation}.

\section{Numerical illustration: learning when to delegate to AI}
\label{sec:numerical-ai}

\subsection{Model and objective specification}

We consider time-homogeneous project dynamics with linear mean
reversion, action-dependent baseline volatility, and a saturating
AI-induced volatility coefficient:
\[
\begin{aligned}
b(t,x,a)
    =a-\delta x,
\quad
\sigma(t,x,a)
    =\sigma_0+\sigma_1a,\quad
\gamma(t,x,e,\ell)
    =\gamma_0+\gamma_1
      \frac{\ell^S+\ell^A}{1+\ell^S+\ell^A},
\end{aligned}
\]
where $\ell=(\ell^S,\ell^A)$ denotes the turbulence intensities. 
The bivariate Hawkes process has constant baseline intensities
and power-law excitation kernels, memory-dependent (non-Markovian), approached with a mixture of exponential similar to \cite{bielecki2026continuoustimereinforcementlearningcontrolled}, see Appendix \ref{app:markovian-approximation} and represented in the simulator by a 20-term exponential mixture fitted on the filter grid. Its amplitudes decrease
linearly with monitoring effort $\mu_t^i(e)=\mu_i,$ $
\Phi_{ij}(r,e)
    =(1-e)A_{ij}(1+\rho_{ij}r)^{-\beta_{ij}},\; i,j\in\{S,A\}.$ 
We use quadratic operating costs and symmetric switching costs, $c_0(a,e)=k_0+k_a a^2+k_e e^2,\; c_1(a,e)=k_{\mathrm{AI}}+a^2,$ and $\chi_{01}=\chi_{10}=\chi.$
In this specification effort acts only through the excitation amplitude, which vanishes at $e = 1$; $\mu,\gamma$ are effort-independent. Parameter values and the specifications of $f$ and $g$ are
collected in Appendix \ref{app:choice}, Table~\ref{tab:num-parameters}.

\subsection{Black-box AI modeling}\label{sec:blackbox}

We propose a design of  action $\hat a$ for the AI recommendation. 
When optimizing, the AI is not considering the turbulence its action is generating, and only focus on the pure project value only. To model the black-box by the autonomous AI mode, we consider the project-only oriented dynamics
\[ dY_s=(b_s-\delta Y_s)ds+\sigma(b_s)dW_s^\circ,\]
where $b$ is the action to be optimize by the AI. The AI to thus selects the black-box strategy as follow:
\[
 \sup_{b\in[0,1]}\E \Big[
 g(Y_T)+\int_0^T\{f(Y_s)-k_{AI}-b_s^2 \}ds- \frac{\eta}{2}[Y]_T\Big]. 
\]
The solution of this problem is details in Appendix \ref{app:nominal-ai}. The optimizer $a^*$ is given as a function of time $t$ and evaluated at the realized state $X_t$. 
This pretraining excludes turbulent risk, monitoring and
switching. Fixing a time discretization $(t_k)_{k\geq 0}$ the AI autonomous action is
\begin{align*}
\widehat a_{t}^{\rm raw}=
 a^*(t,X_{t})+R_t,
 dR_t=-\theta _0R_t\,dt+\theta_1 s(L_t)\,dW_t^R-M^AdN^A_t, \end{align*} with $L_t = \lambda^A_t+\lambda^S_t$ and $R_0,M^A,s(\cdot)$
specified in Table \ref{tab:num-parameters}. The error $R_t$ evolves in both regimes. Action-corruption events push $R$
downward and lower the recommendation; the recovery drift subsequently
returns it towards zero, while Hawkes risk raises its Brownian uncertainty.

\subsection{Reinforcement learning, Hawkes-PPO and value comparison}
The Hawkes-PPO algorithm used is presented in Appendix \ref{app:hawkes-ppo}, Algorithm \ref{alg:hawkes-ppo} adapted from \cite{bielecki2026continuoustimereinforcementlearningcontrolled}. The Hawkes filters are computed solely from observed risk-event times;
the true intensities and excitation coefficients are not supplied to
the policy. Standard PPO losses, network architectures, training
settings, and deterministic deployment are specified in the Appendix~\ref{app:hawkes-ppo}.
We first compare the values of Hawkes-PPO with a non-switching policy and a know-everything oracle (see Appendix~\ref{app:oracle}), a pure AI mode (Appendix \ref{app:nominal-ai}) and a pure human mode (Appendix \ref{app:human}). We use, 4,096 fresh common-noise Monte Carlo simulations. Figure \ref{fig:comparison} reports the value \eqref{eq:value},
the mean pathwise volatility
$\E[\sqrt{[X]_T/T}],$
and the mean terminal project state $\E[X_T]$ with 90\% Monte Carlo confidence interval. AI-start and Human-start switching results are kept separate. We observe that the Oracle achieves obviously the best objective, followed very closely to the Hawkes-PPO, far from a pure human or AI regime, as a sanity check of our reinforcement learning method together with optimality. Hawkes-PPO reduces significantly the volatility of the project staying close to the Oracle, far from a pure AI-regime and still very close to a pure human mode. Our algorithm also over performed a pure AI mode for the terminal value of the project, without considering cost of effort, action for the pure human mode. The switching
gain is a tradeoff among risk, implementation cost, and project benefits. Among the eight RL variants of Appendix \ref{app:other-rl}, only Hawkes-PPO improves on the fixed-Human benchmark from a Human start, with the best value compared with an oracle, and the filter bank helps PPO but not SAC or DDPG; the switching gain is therefore algorithm-dependent under our budget. 

\subsection{Switching between autonomous AI and AI-assisted Human}

Figure~\ref{fig:path} illustrates the results obtained from the Hawkes-PPO path with four switches
at $1.46,3.88,4.94,7.25$.
During AI spells, state-risk events raise Hawkes intensity and quadratic
variation, lower the pathwise score, and precede increased switching
intensity towards Human mode. During the two event-free Human spells,
mitigation falls and the intensity of returning to AI rises as event history
decays. The pattern is consistent with the learned balance between persistent
risk and Human cost. A stochastic intensity does not force a switch at its
maximum or at a fixed threshold.

\subsection{How incident dynamics and costs shape supervision}
\label{sec:supervision-statics}

We next examine how the supervision policy responds to incident
dynamics and intervention costs. Using Hawkes-PPO with the same
public incident-history filters, we vary excitation strength $A_{ij}$ ( 0.5 or 1.5 $\times$ baseline), Human fixed
operating cost $k_0$ (0.75 or 1.25 $\times$ baseline), and the per-switch fee $\chi$ (0.5 or 2 $\times$ baseline). Appendix~\ref{app:supervision-statics} specifies the design,
protocol, and robustness checks. The results of this sensitivity analysis are shown in Table \ref{tab:comparative-statics} for $T=8$. We first observe a strong increase in human involvement when the excitation amplitude rises. Low excitation (half the baseline) leading to less human involvement while high excitation ($3/2$ of a baseline) involved significantly more the Human supervision with less effort, fewer switches, fewer bursts and higher terminal quality, at a lower
objective. Increasing the Human fixed-cost multiplier from $0.75$ to $1.25$
reduces Human time ($76.0\%$ to $13.7\%$) for AI starts and
from $99.6\%$ to $19.1\%$ for Human starts. Effort during Human operation rises, while incident counts and burst probability increase, and terminal
quality decreases at baseline excitation. The crossed checks
preserve the Human-time and incident directions, but effort and
quality responses have exceptions: at high excitation, higher
Human cost increases terminal quality despite more incidents. Comparing per-switch fee multipliers
$0.5$ and $2$, switching decreases in both initial regimes,
while Human time falls for AI starts and rises for Human starts. Appendix~\ref{app:supervision-statics} gives supporting figures and numerical analysis. 

\section{Conclusion}
\label{sec:conclusion}
We introduced a framework for turbulent AI risk in which AI-generated incidents can trigger further incidents and amplify uncertainty in a project’s evolution. Within this framework, we formulated an optimal switching problem between full delegation to a black-box AI system and human–AI assisted decision-making. The analysis captures the trade-off between the lower operating costs of AI delegation and the cost of human intervention to mitigate cascading risks. It highlights the importance of adapting human involvement to the evolving risk state, accounting for both immediate operating costs and the consequences of incident accumulation. By modeling cascading incidents and adaptive human intervention, this work contributes to the broader study of AI risk management \cite{ziosi2026open}, with a specific focus on the dynamic accumulation and amplification of AI-generated risks. 

Our framework provides a stylized description of these interactions, and its application to deployed systems would require empirical calibration and validation of the incident dynamics. A natural extension is to consider interconnected AI agents cooperating under human oversight. Whether incidents attenuate or amplify across agents is empirically open. \cite{jamshidi2026hallucination} find attenuation but incidents could propagate across agents and create feedback loops, turning local failures into systemic risks. This extension would raise a further control problem: how to allocate human oversight across a network to contain cascading incidents while retaining the benefits of AI delegation.

\begin{figure}[H]
 \centering\includegraphics[width=\linewidth]{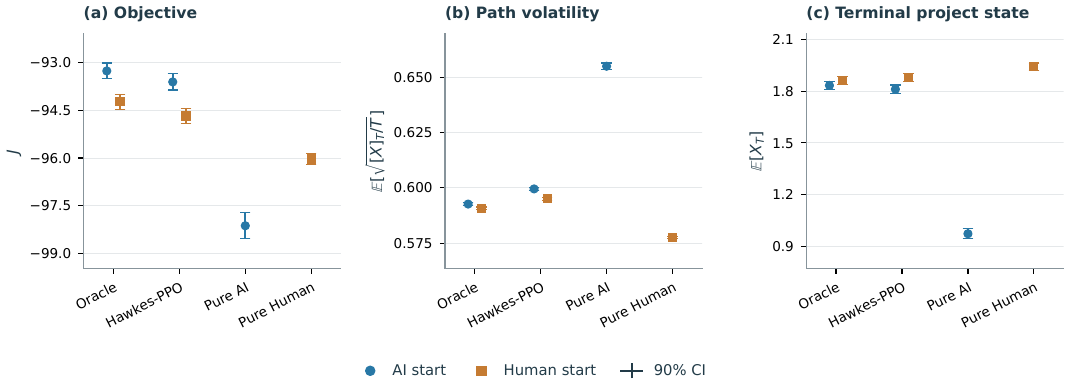}
 \caption{\footnotesize Four-method comparison. Error bars are $90\%$ confidence intervals.
 Higher objective and terminal state are better; lower realized volatility
 indicates less quadratic variation.}
 \label{fig:comparison}
\end{figure}

\begin{figure}[H]
 \centering\includegraphics[width=0.95\linewidth]{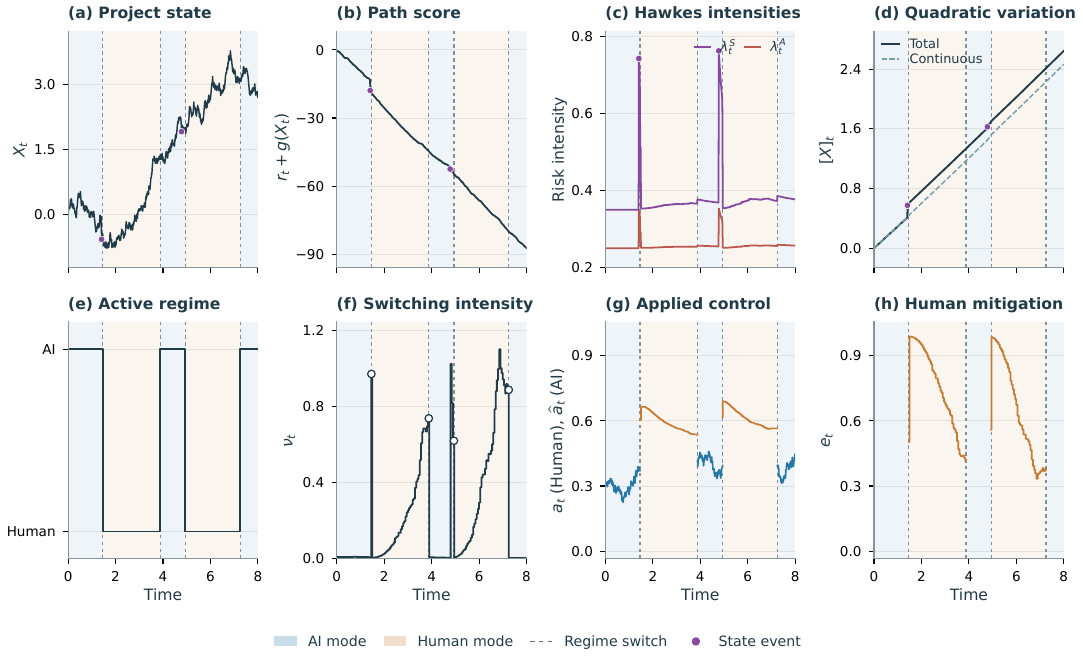}
 \caption{\footnotesize Vertical dashed lines mark regime switches. From left to right, top to bottom: paths of $X$, net running reward and terminal reward, intensity of state and action turbulence (Hawkes intensities), quadratic variation of the project, regime activation, optimal switching intensity, action of the agent human or AI, human effort.}
 \label{fig:path}
\end{figure}

\begin{table}[H]
\centering
\caption{Supervision and project outcomes across model configurations.}
\label{tab:comparative-statics}
\begingroup\fontsize{8}{10}\selectfont
\setlength{\tabcolsep}{4pt}
\begin{tabular}{lrrrrrrr}
\toprule
Configuration & \shortstack{Human time (\%)} & \shortstack{Effort (Human)} & Switches & Incidents & \shortstack{Burst (\%)} & \shortstack{Terminal quality} & Objective \\
\midrule
\multicolumn{8}{l}{\textit{Panel A. AI initial regime}} \\
\addlinespace[2pt]
\textbf{Baseline} & \textbf{46.4} & \textbf{0.896} & \textbf{1.46} & \textbf{5.40} & \textbf{45.9} & \textbf{1.81} & \textbf{-93.61} \\
\addlinespace[2pt]
Excitation 0.5 & 0.9 & 0.893 & 0.27 & 5.59 & 45.9 & 1.17 & -85.96 \\
Excitation 1.5 & 73.6 & 0.842 & 1.22 & 5.13 & 38.6 & 1.88 & -95.85 \\
\addlinespace[2pt]
Human cost 0.75 & 76.0 & 0.731 & 1.15 & 5.05 & 36.7 & 1.82 & -89.68 \\
Human cost 1.25 & 13.7 & 0.973 & 1.01 & 6.20 & 57.4 & 1.30 & -96.33 \\
\addlinespace[2pt]
Switch fee $\times0.5$ & 45.4 & 0.880 & 1.34 & 5.44 & 47.0 & 1.90 & -93.72 \\
Switch fee $\times2$ & 43.0 & 0.853 & 0.91 & 5.53 & 48.6 & 2.11 & -94.36 \\
\addlinespace[2pt]
A 0.5; cost 0.75 & 2.4 & 0.792 & 0.23 & 5.58 & 45.8 & 1.16 & -85.84 \\
A 0.5; cost 1.25 & 0.5 & 0.753 & 0.23 & 5.61 & 46.0 & 1.15 & -86.00 \\
A 1.5; cost 0.75 & 86.5 & 0.783 & 1.24 & 4.99 & 35.5 & 1.77 & -90.86 \\
A 1.5; cost 1.25 & 64.0 & 0.914 & 1.68 & 5.26 & 42.6 & 1.91 & -100.14 \\
\midrule
\multicolumn{8}{l}{\textit{Panel B. Human initial regime}} \\
\addlinespace[2pt]
\textbf{Baseline} & \textbf{58.4} & \textbf{0.809} & \textbf{2.01} & \textbf{5.28} & \textbf{43.4} & \textbf{1.88} & \textbf{-94.68} \\
\addlinespace[2pt]
Excitation 0.5 & 5.3 & 0.516 & 1.42 & 5.58 & 45.7 & 1.19 & -87.33 \\
Excitation 1.5 & 98.1 & 0.696 & 0.22 & 4.92 & 33.8 & 2.02 & -96.85 \\
\addlinespace[2pt]
Human cost 0.75 & 99.6 & 0.623 & 0.07 & 4.93 & 33.5 & 1.93 & -89.75 \\
Human cost 1.25 & 19.1 & 0.862 & 2.09 & 6.13 & 56.5 & 1.33 & -97.73 \\
\addlinespace[2pt]
Switch fee $\times0.5$ & 58.1 & 0.793 & 1.92 & 5.31 & 44.0 & 1.96 & -94.65 \\
Switch fee $\times2$ & 82.5 & 0.693 & 0.66 & 5.09 & 38.3 & 2.51 & -96.17 \\
\addlinespace[2pt]
A 0.5; cost 0.75 & 13.6 & 0.607 & 1.28 & 5.51 & 44.6 & 1.19 & -86.75 \\
A 0.5; cost 1.25 & 4.2 & 0.417 & 1.41 & 5.60 & 45.8 & 1.15 & -87.65 \\
A 1.5; cost 0.75 & 99.6 & 0.708 & 0.13 & 4.90 & 33.5 & 1.83 & -90.45 \\
A 1.5; cost 1.25 & 74.4 & 0.826 & 2.01 & 5.16 & 40.2 & 1.98 & -101.52 \\
\bottomrule
\end{tabular}
\endgroup
\par\smallskip
\begin{minipage}{0.98\textwidth}\footnotesize
4,096 evaluation paths per initial regime. Entries are path means, except effort, which is total Human effort divided by total Human time. Burst denotes at least three combined state and action incidents in a window of one time unit. Excitation, Human cost, and per-switch fee are baseline multipliers. 
\end{minipage}
\end{table}

 \newpage
\bibliography{ref}

@article{hawkes1971,
  author  = {Hawkes, A. G.},
  title   = {Spectra of Some Self-Exciting and Mutually Exciting Point Processes},
  journal = {Biometrika},
  volume  = {58},
  number  = {1},
  pages   = {83--90},
  year    = {1971},
  doi     = {10.1093/biomet/58.1.83}
}

@article{horstxu2022,
  author  = {Horst, Ulrich and Xu, Wei},
  title   = {The Microstructure of Stochastic Volatility Models with Self-Exciting Jump Dynamics},
  journal = {The Annals of Applied Probability},
  volume  = {32},
  number  = {6},
  pages   = {4568--4610},
  year    = {2022},
  doi     = {10.1214/22-AAP1796}
}

@inproceedings{zhang2024snowball,
  author    = {Zhang, Muru and Press, Ofir and Merrill, William and Liu, Alisa and Smith, Noah A.},
  title     = {How Language Model Hallucinations Can Snowball},
  booktitle = {Proceedings of the 41st International Conference on Machine Learning},
  series    = {Proceedings of Machine Learning Research},
  volume    = {235},
  pages     = {59670--59684},
  year      = {2024}
}

@misc{liu2026agenthallubenchmarkingautomatedhallucination,
      title={AgentHallu: Benchmarking Automated Hallucination Attribution of {LLM}-based Agents}, 
      author={Xuannan Liu and Xiao Yang and Zekun Li and Peipei Li and Ran He},
      year={2026},
      eprint={2601.06818},
      archivePrefix={arXiv},
      primaryClass={cs.CL},
      url={https://arxiv.org/abs/2601.06818}, 
}

@article{jamshidi2026hallucination,
  title={Hallucination Cascade: Analyzing Error Propagation in Multi-Agent LLM Systems},
  author={Jamshidi, Saeid and Dakhel, Arghavan Moradi and Nafi, Kawser Wazed and Khomh, Foutse},
  journal={arXiv preprint arXiv:2606.07937},
  year={2026}
}

@article{bouchard2009stochastic,
  title={A stochastic target formulation for optimal switching problems in finite horizon},
  author={Bouchard, Bruno},
  journal={Stochastics: An International Journal of Probability and Stochastics Processes},
  volume={81},
  number={2},
  pages={171--197},
  year={2009},
  publisher={Taylor \& Francis}
}

@article{denkert2025control,
  title={Control randomisation approach for policy gradient and application to reinforcement learning in optimal switching},
  author={Denkert, Robert and Pham, Huy{\^e}n and Warin, Xavier},
  journal={Applied Mathematics \& Optimization},
  volume={91},
  number={1},
  pages={9},
  year={2025},
  publisher={Springer}
}

@article{elie2014bsde,
  title={{BSDE} representations for optimal switching problems with controlled volatility},
  author={Elie, Romuald and Kharroubi, Idris},
  journal={Stochastics and Dynamics},
  volume={14},
  number={03},
  pages={1450003},
  year={2014},
  publisher={World Scientific}
}

@article{wang2020reinforcement,
  title={Reinforcement learning in continuous time and space: A stochastic control approach},
  author={Wang, Haoran and Zariphopoulou, Thaleia and Zhou, Xun Yu},
  journal={Journal of Machine Learning Research},
  volume={21},
  number={198},
  pages={1--34},
  year={2020}
}

@article{krvsek2023randomisation,
  title={Randomisation with moral hazard: a path to existence of optimal contracts},
  author={Kr{\v{s}}ek, Daniel and Possama{\"\i}, Dylan},
  journal={arXiv preprint arXiv:2311.13278},
  year={2023}
}

@misc{bielecki2026continuoustimereinforcementlearningcontrolled,
      title={Continuous-Time Reinforcement Learning for Controlled Hawkes Jump-Diffusions}, 
      author={Tomasz R. Bielecki and Thibaut Mastrolia and Haoze Yan},
      year={2026},
      eprint={2608.19151},
      archivePrefix={arXiv},
      primaryClass={cs.LG},
      url={https://arxiv.org/abs/2608.19151}, 
}

@inproceedings{min2023factscore,
  author    = {Min, Sewon and Krishna, Kalpesh and Lyu, Xinxi and Lewis, Mike and Yih, Wen-tau and Koh, Pang Wei and Iyyer, Mohit and Zettlemoyer, Luke and Hajishirzi, Hannaneh},
  title     = {{FActScore}: Fine-Grained Atomic Evaluation of Factual Precision in Long Form Text Generation},
  booktitle = {Proceedings of the 2023 Conference on Empirical Methods in Natural Language Processing},
  pages     = {12076--12100},
  year      = {2023}
}

@inproceedings{es-etal-2024-ragas,
  title={Ragas: Automated evaluation of retrieval augmented generation},
  author={Es, Shahul and James, Jithin and Anke, Luis Espinosa and Schockaert, Steven},
  booktitle={Proceedings of the 18th conference of the european chapter of the association for computational linguistics: system demonstrations},
  pages={150--158},
  year={2024}
}

@misc{cheng2026deterministicpolicygradientreinforcement,
      title={Deterministic Policy Gradient for Reinforcement Learning with Continuous Time and State}, 
      author={Ziheng Cheng and Xin Guo and Yufei Zhang},
      year={2026 (v2)},
      eprint={2509.23711},
      archivePrefix={arXiv},
      primaryClass={cs.LG},
      url={https://arxiv.org/abs/2509.23711}, 
}

@misc{ppooriginal,
  author = {Schulman, J. and Wolski, F. and Dhariwal, P.
            and Radford, A. and Klimov, O.},
  title = {Proximal Policy Optimization Algorithms},
  year = {2017},
  eprint = {1707.06347},
  archivePrefix = {arXiv},
  url = {https://arxiv.org/abs/1707.06347}
}

@article{dgm,
  author = {Sirignano, J. and Spiliopoulos, K.},
  title = {{DGM}: A Deep Learning Algorithm for Solving
           Partial Differential Equations},
  journal = {Journal of Computational Physics},
  volume = {375},
  pages = {1339--1364},
  year = {2018},
  doi = {10.1016/j.jcp.2018.08.029}
}

@inproceedings{haarnoja2018sac,
  title={Soft actor-critic: Off-policy maximum entropy deep reinforcement learning with a stochastic actor},
  author={Haarnoja, Tuomas and Zhou, Aurick and Abbeel, Pieter and Levine, Sergey},
  booktitle={International conference on machine learning},
  pages={1861--1870},
  year={2018},
  organization={Pmlr}
}

@article{lillicrap2015continuous,
  title={Continuous control with deep reinforcement learning},
  author={Lillicrap, Timothy P and Hunt, Jonathan J and Pritzel, Alexander and Heess, Nicolas and Erez, Tom and Tassa, Yuval and Silver, David and Wierstra, Daan},
  journal={arXiv preprint arXiv:1509.02971},
  year={2015}
}

@article{hao2024quantifying,
  title={Quantifying the uncertainty of {LLM} hallucination spreading in complex adaptive social networks},
  author={Hao, Guozhi and Wu, Jun and Pan, Qianqian and Morello, Rosario},
  journal={Scientific reports},
  volume={14},
  number={1},
  pages={16375},
  year={2024},
  publisher={Nature Publishing Group UK London}
}

@inproceedings{sinha2026illusion,
  author    = {Akshit Sinha and Arvindh Arun and Shashwat Goel and Steffen Staab and Jonas Geiping},
  title     = {The illusion of diminishing returns: Measuring long horizon execution in {LLMs}},
  booktitle = {International Conference on Learning Representations},
  year      = {2026},
  url       = {https://proceedings.iclr.cc/paper_files/paper/2026/file/3b4e1336f775c3dba16ebbb8d2afd258-Paper-Conference.pdf},
  note      = {arXiv:2509.09677}
}

@inproceedings{yang2024sweagent,
  title     = {{SWE}-agent: Agent-Computer Interfaces Enable Automated Software Engineering},
  author    = {Yang, John and Jimenez, Carlos E. and Wettig, Alexander and Lieret, Kilian and Yao, Shunyu and Narasimhan, Karthik and Press, Ofir},
  booktitle = {Advances in Neural Information Processing Systems},
  year      = {2024},
  note      = {arXiv:2405.15793},
  url       = {https://arxiv.org/abs/2405.15793}
}

@inproceedings{cemri2025mast,
  title     = {Why Do Multi-Agent {LLM} Systems Fail?},
  author    = {Cemri, Mert and Pan, Melissa Z. and Yang, Shuyi and Agrawal, Lakshya A. and Chopra, Bhavya and Tiwari, Rishabh and Keutzer, Kurt and Parameswaran, Aditya and Klein, Dan and Ramchandran, Kannan and Zaharia, Matei and Gonzalez, Joseph E. and Stoica, Ion},
  booktitle = {Advances in Neural Information Processing Systems},
  year      = {2025},
  note      = {arXiv:2503.13657},
  url       = {https://arxiv.org/abs/2503.13657}
}

@inproceedings{olausson2024selfrepair,
  title     = {Is Self-Repair a Silver Bullet for Code Generation?},
  author    = {Olausson, Theo X. and Inala, Jeevana Priya and Wang, Chenglong and Gao, Jianfeng and Solar-Lezama, Armando},
  booktitle = {International Conference on Learning Representations},
  year      = {2024},
  note      = {arXiv:2306.09896},
  url       = {https://arxiv.org/abs/2306.09896}
}

@inproceedings{huang2024large,
  title={Large language models cannot self-correct reasoning yet},
  author={Huang, Jie and Chen, Xinyun and Mishra, Swaroop and Zheng, Huaixiu Steven and Yu, Adams and Song, Xinying and Zhou, Denny},
  booktitle={International conference on learning representations},
  volume={2024},
  pages={32808--32824},
  year={2024}
}

@article{engels2025scaling,
  title={Scaling laws for scalable oversight},
  author={Engels, Joshua and Baek, David and Kantamneni, Subhash and Tegmark, Max},
  journal={Advances in Neural Information Processing Systems},
  volume={38},
  pages={97327--97366},
  year={2025}
}

@inproceedings{greenblatt2024control,
  title     = {{AI} Control: Improving Safety Despite Intentional Subversion},
  author    = {Greenblatt, Ryan and Shlegeris, Buck and Sachan, Kshitij and Roger, Fabien},
  booktitle = {International Conference on Machine Learning},
  year      = {2024},
  number={650},
  pages={16295 - 16336}
}

@article{bhatt2025ctrlz,
  title   = {Ctrl-{Z}: Controlling {AI} Agents via Resampling},
  author  = {Bhatt, Aryan and Rushing, Cody and Kaufman, Adam and Tracy, Tyler and Georgiev, Vasil and Matolcsi, David and Khan, Akbir and Shlegeris, Buck},
  journal = {arXiv preprint arXiv:2504.10374},
  year    = {2025},
  url     = {https://arxiv.org/abs/2504.10374}
}

@inproceedings{andriushchenko2025agentharm,
  title     = {{AgentHarm}: A Benchmark for Measuring Harmfulness of {LLM} Agents},
  author    = {Andriushchenko, Maksym and Souly, Alexandra and Dziemian, Mateusz and Duenas, Derek and Lin, Maxwell and Wang, Justin and Hendrycks, Dan and Zou, Andy and Kolter, Zico and Fredrikson, Matt and Winsor, Eric and Wynne, Jerome and Gal, Yarin and Davies, Xander},
  booktitle = {International Conference on Learning Representations},
  year      = {2025},
  note      = {arXiv:2410.09024},
  url       = {https://arxiv.org/abs/2410.09024}
}

@inproceedings{vijayvargiya2026openagentsafety,
  title     = {{OpenAgentSafety}: A Comprehensive Framework for Evaluating Real-World {AI} Agent Safety},
  author    = {Vijayvargiya, Sanidhya and Soni, Aditya Bharat and Zhou, Xuhui and Wang, Zora Zhiruo and Dziri, Nouha and Neubig, Graham and Sap, Maarten},
  booktitle = {International Conference on Learning Representations},
  year      = {2026},
  note      = {arXiv:2507.06134},
  url       = {https://arxiv.org/abs/2507.06134}
}

@misc{anthropic2025vend,
  title        = {Project Vend: Can {Claude} Run a Small Shop? ({A}nd Why Does That Matter for the Future of {AI}?)},
  author       = {{Anthropic}},
  howpublished = {\url{https://www.anthropic.com/research/project-vend-1}},
  year         = {2025},
  month        = jun
}

@inproceedings{madras2018predict,
  title     = {Predict Responsibly: Improving Fairness and Accuracy by Learning to Defer},
  author    = {Madras, David and Pitassi, Toniann and Zemel, Richard},
  booktitle = {Advances in Neural Information Processing Systems},
  year      = {2018}
}

@inproceedings{mozannar2020consistent,
  title     = {Consistent Estimators for Learning to Defer to an Expert},
  author    = {Mozannar, Hussein and Sontag, David},
  booktitle = {International Conference on Machine Learning},
  year      = {2020},
  url={https://arxiv.org/abs/2006.01862}
}

@article{fuchs2024delegation,
  title={Optimizing delegation in collaborative human-AI hybrid teams},
  author={Fuchs, Andrew and Passarella, Andrea and Conti, Marco},
  journal={ACM Transactions on Autonomous and Adaptive Systems},
  volume={19},
  number={4},
  pages={1--33},
  year={2024},
  publisher={ACM New York, NY}
}

@article{lykouris2024learning,
  title={Learning to defer in congested systems: the {AI}-human interplay},
  author={Lykouris, Thodoris and Weng, Wentao},
  journal={arXiv preprint arXiv:2402.12237},
  year={2024}
}

@inproceedings{ross2011reduction,
  title     = {A Reduction of Imitation Learning and Structured Prediction to No-Regret Online Learning},
  author    = {Ross, St{\'e}phane and Gordon, Geoffrey J. and Bagnell, J. Andrew},
  booktitle = {Proceedings of the 14th International Conference on Artificial Intelligence and Statistics},
  year      = {2011},
  note      = {15:627-635}
}

@article{bacry2015hawkes,
  title   = {Hawkes Processes in Finance},
  author  = {Bacry, Emmanuel and Mastromatteo, Iacopo and Muzy, Jean-Fran{\c{c}}ois},
  journal = {Market Microstructure and Liquidity},
  volume  = {1},
  number  = {1},
  pages   = {1550005},
  year    = {2015},
  note    = {arXiv:1502.04592}
}

@article{kammler1976,
title = {Approximation with sums of exponentials in Lp[0, $\infty$)},
journal = {Journal of Approximation Theory},
volume = {16},
number = {4},
pages = {384-408},
year = {1976},
issn = {0021-9045},
doi = {https://doi.org/10.1016/0021-9045(76)90071-X},
url = {https://www.sciencedirect.com/science/article/pii/002190457690071X},
author = {David W Kammler}
}

@article{mastrolia2025agency,
  title={Agency problems and adversarial bilevel optimization under uncertainty and cyber threats},
  author={Mastrolia, Thibaut and Yan, Haoze},
  journal={arXiv preprint arXiv:2505.08989},
  year={2025}
}

@article{khabou2025markov,
  title={{M}arkov approximation for controlled {H}awkes Jump-Diffusions with general kernels},
  author={Khabou, Mahmoud and Talbi, Mehdi},
  journal={arXiv preprint arXiv:2507.11294},
  year={2025}
}

@article{hillairet2026optimal,
  title={Optimal impulse control for cyber risk management},
  author={Hillairet, Caroline and Mastrolia, Thibaut and Sabbagh, Wissal},
  journal={Applied Mathematics \& Optimization},
  volume={94},
  number={1},
  pages={18},
  year={2026},
  publisher={Springer}
}

@article{hernandez2026cyber,
  title={Cyber risk prevention under risk averse spectral criteria},
  author={Hern{\'a}ndez-Santib{\'a}{\~n}ez, Nicol{\'a}s and Kazi-Tani, Nabil and Vazquez-Gaete, Mariano},
  journal={Applied Mathematics \& Optimization},
  volume={94},
  number={1},
  pages={14},
  year={2026},
  publisher={Springer}
}

@techreport{cherkaoui2026stress,
  author      = {Yousra Cherkaoui and Caroline Hillairet and Thomas Peyrat and Anthony R{\'e}veillac},
  title       = {Stress scenarios of cyber loss processes with dependencies},
  type        = {Working paper},
  number      = {hal-05558990},
  institution = {HAL},
  year        = {2026},
  url         = {https://hal.science/hal-05558990v1}
}

@article{bessy2021multivariate,
  title={Multivariate {H}awkes process for cyber insurance},
  author={Bessy-Roland, Yannick and Boumezoued, Alexandre and Hillairet, Caroline},
  journal={Annals of Actuarial Science},
  volume={15},
  number={1},
  pages={14--39},
  year={2021},
  publisher={Cambridge University Press}
}

@article{callegaro2025stochastic,
  title={A stochastic Gordon--Loeb model for optimal cybersecurity investment under clustered attacks},
  author={Callegaro, Giorgia and Fontana, Claudio and Hillairet, Caroline and Ongarato, Beatrice},
  journal={Annals of Actuarial Science},
  pages={1--29},
  year={2026},
  publisher={Cambridge University Press}
}

@article{gordon2002economics,
  title={The economics of information security investment},
  author={Gordon, Lawrence A and Loeb, Martin P},
  journal={ACM Transactions on Information and System Security (TISSEC)},
  volume={5},
  number={4},
  pages={438--457},
  year={2002},
  publisher={ACM New York, NY, USA}
}

@article{jaisson2015limit,
  title={Limit theorems for nearly unstable {H}awkes processes},
  author={Jaisson, Thibault and Rosenbaum, Mathieu},
  journal={The Annals of Applied Probability},
  volume={25},
  number={2},
  year={2015}
}

@article{bremaud1996,
 ISSN = {00911798, 2168894X},
 URL = {http://www.jstor.org/stable/2244985},
 author = {Pierre Brémaud and Laurent Massoulié},
 journal = {The Annals of Probability},
 number = {3},
 pages = {1563--1588},
 publisher = {Institute of Mathematical Statistics},
 title = {Stability of Nonlinear Hawkes Processes},
 urldate = {2026-09-21},
 volume = {24},
 year = {1996}
}

@techreport{tatar2020quantification,
  author      = {Tatar, Unal and Keskin, Omer and
                 Bahsi, Hayretdin and Pinto, C. Ariel},
  title       = {Quantification of Cyber Risk for Actuaries:
                 An Economic-Functional Approach},
  institution = {Society of Actuaries},
  type        = {Research Report},
  year        = {2020},
  month       = may,
  url         = {https://www.soa.org/resources/research-reports/2020/quantification-cyber-risk/}
}

@article{thekdi2023disaster,
  author  = {Thekdi, Shital and Tatar, Unal and
             Santos, Joost and Chatterjee, Samrat},
  title   = {Disaster risk and artificial intelligence:
             A framework to characterize conceptual synergies
             and future opportunities},
  journal = {Risk Analysis},
  year    = {2023},
  volume  = {43},
  number  = {8},
  pages   = {1641--1656},
  doi     = {10.1111/risa.14038}
}

@article{baldwin2017contagion,
  author  = {Baldwin, Adrian and Gheyas, Iffat and Ioannidis, Christos and
             Pym, David and Williams, Julian},
  title   = {Contagion in cyber security attacks},
  journal = {Journal of the Operational Research Society},
  year    = {2017},
  volume  = {68},
  number  = {7},
  pages   = {780--791},
  doi     = {10.1057/jors.2016.37}
}

@article{ziosi2026open,
  title={Open Problems in Frontier AI Risk Management},
  author={Ziosi, Marta and Plueckebaum, Miro and Casper, Stephen and Papadatos, Henry and Chin, Ze Shen and Slattery, Peter and Gealy, James and Rudner, Tim GJ and Tse, Brian and Gil, Ariel and others},
  journal={arXiv preprint arXiv:2604.25982},
  year={2026}
}
\bibliographystyle{apalike}

\appendix

\section{Markovian approximation and observable-policy stability}
\label{app:markovian-approximation}

We use the approximation framework of
\cite{khabou2025markov,bielecki2026continuoustimereinforcementlearningcontrolled}, with mixture of exponential approximation
background from \cite{kammler1976}. We extend their framework to
intensity-dependent diffusion, especially with the dependency $\mathbf{\lambda}$ in the volatility process $\gamma$. The main difficulty is in the observation of $\hat a$. We will distinguish a full observation mode in which the error process $R$ is fully observed and only the recommendation $G$ is observed. 

We first state the main assumptions on the coefficient in our model, see Section \ref{app:ma-model}. Then, we introduce the definition of memory filters and the associated convergences of the approached intensity, derived from the mixture of exponential to the primal one, see Section \ref{app:ma-inherited}. We then turn to the convergence of the approached problem to the value function of the primal optimization \eqref{eq:value} when the error $R$ induced by the AI is observed, see Section \ref{app:ma-full}. Finally, in Section \ref{app:ma-observable}, when the recommendation $G$ is observed in AI mode but not the error $R$, we derive information-loss bounds between full-information and observable-history control for a fixed recommendation map $G$, under additional verification, sensitivity, and policy-existence assumptions.

\subsection{Model and assumptions}
\label{app:ma-model}
We use the same framework as in \cite{bielecki2026continuoustimereinforcementlearningcontrolled} with the similar Poisson embedding construction in \cite{bremaud1996}. Write $\pi=(a,e,\nu)\in \mathfrak U$. We recall
\[
 \begin{gathered}
 \mathbf{S}_t=(X_t,R_t),\qquad
 \widehat a_t=\operatorname{Proj}_A G(t,\mathbf{S}_{t-}),\\
 \alpha_t:=\alpha(t,\mathbf s,i,\pi)=(1-I_{t-})a_t+I_{t-}\widehat a_t,\qquad
 \bar e_t:=\bar e(i,\pi)=(1-I_{t-})e_t+I_{t-}\varepsilon.
 \end{gathered}
\]

Use Euclidean vector norms and the Frobenius norm for diffusion matrices.
For intensity and count vectors, write $\|v\|_1:=\sum_i|v_i|$.
For kernel matrices $M\in\mathbb R^{2\times2}$, use the induced matrix norm
\[
 \|M\|:=\|M\|_{1\to1}
 :=\sup_{v\in\mathbb R^2\setminus\{0\}}
       \frac{\|Mv\|_1}{\|v\|_1}
 =\max_{j=1,2}\sum_{i=1}^2|M_{ij}|.
\] For each integer $K\ge1$, we define $\Phi^K(r,e):=\sum_{\ell=1}^K Q_{K,\ell}(e)e^{-\beta\ell r},$ which represents a $K$-term exponential
approximation of the true kernel $\Phi$ and $Q_{K,\ell}(\cdot)\in E\to\mathbb R^{2\times 2}$ denotes the effort effect on the kernel. We set the following assumption for a fixed kernel $\Phi$. 

Let $U:=A\times E\times[0,\overline\nu]$, where
$A$ and $E$ are compact, and assume $\varepsilon\in E$.
The original and approximating systems have the same
initial state and regime, with empty incident history.

\begin{assumption}[Kernel approximation]
\label{ass:ma-kernel-approximation}
    For a fixed $\beta>0$, the selected approximants satisfy
\begin{equation*}
 \begin{aligned}
 \qquad \epsilon_K:=\int_0^T d_K(r)\,dr\underset{K\to\infty}{\longrightarrow}0,\quad
 d_K(r)&:=\sup_{e\in E}\|\Phi^K(r,e)-\Phi(r,e)\|.
 \end{aligned}
\end{equation*}
\end{assumption}
We refer to Remark 3.2 in ~\cite{bielecki2026continuoustimereinforcementlearningcontrolled} for the sufficient condition satisfying the above assumption.
In equations shared by both
models, use $q\in\{0,K\}$, with $\Phi^0:=\Phi$ and the convention
\[
 (\mathbf{S}^0,N^0,\lambda^0,J^0,I^0,\pi^0)=(\mathbf{S},N,\lambda,J,I,\pi).
\]
The same convention applies to components and derived quantities.
Use a common Brownian motion $B=(W^0,W^{AI},W^{R})$, independent marked Poisson random
measures $\Pi^j$ of intensity $dt\,d\theta\,\rho_j(dm)$,
$j\in\{S,A\}$, and an independent switching measure $\Pi^{\mathrm{sw}}$
of intensity $dt\,d\theta$. Here each $\rho_j$ is a probability law.
Write $\Pi=(\Pi^S,\Pi^A,\Pi^{\mathrm{sw}})$ for this common Poisson
embedding; it is shared by the original and approximating systems.
We define
\begin{align}
 &\boldsymbol{\lambda}_t^q
 =\boldsymbol\mu(t,\bar e_t^q)+\int_{(0,t)}
        \boldsymbol\Phi^q(t-s,\bar e_t^q)\,d\boldsymbol N_s^q,
        \label{eq:ma-intensity}\\
 &\notag M^{q,j}(dt,dm)
 =\int_{\mathbb R_+}\mathbf1_{\{\theta\le\lambda_t^{q,j}\}}
          \Pi^j(dt,d\theta,dm),
 \qquad N_t^{q,j}=M^{q,j}((0,t]\times\mathcal M_j),\\
 &\notag dJ_t^q=\int_{\mathbb R_+}\mathbf1_{\{\theta\le\nu_t^q\}}
             \Pi^{\mathrm{sw}}(dt,d\theta),
 \qquad dI_t^q=(1-2I_{t-}^q)\,dJ_t^q,\\
 &d\mathbf{S}_t^q
 \!=\!b_{\mathbf{S}}(t,\mathbf{S}_{t-}^q,I_{t-}^q,\pi_t^q,\boldsymbol\lambda_t^q)dt
   \!+\!\Sigma_{\mathbf{S}}(t,\mathbf{S}_{t-}^q,I_{t-}^q,\pi_t^q,\boldsymbol\lambda_t^q)dB_t\!+\!\sum_j\int_{\mathcal M_j}mM^{q,j}(dt,dm)
        \label{eq:ma-dynamics}
\end{align}
where 
\begin{align*}
b_{\mathbf S}(t,\mathbf s,i,\pi,\ell)&\!:=\!
 \begin{pmatrix}
   (1-i)b(t,x,a)+i b(t,x,\widehat a_t)\\
   -\theta _0R_t
 \end{pmatrix}\\
 \Sigma_{\mathbf S}(t,\mathbf s,i,\pi,\ell)
 &\!:=\!
 \begin{pmatrix}
   (1-i)\sigma(t,x,a)+i\sigma(t,x,\widehat a)
     & (1-i)\gamma(t,x,e,\ell)
+i\gamma(t,x,\varepsilon,\ell)
     & 0\\
   0
     & 0
     & \theta_1 s(L_t)
 \end{pmatrix}.
 \end{align*}
Write
$\mathbf{s}=(x,r)$ for a generic joint state.

\begin{assumption}[Kernel regularity]
\label{ass:ma-kernel}
The baseline $\mu(t,e)$ is deterministic, nonnegative, bounded, and
continuous. The kernels are measurable and entrywise nonnegative on
$[0,T]\times E$. There exist finite constants $H,L>0$, independent
of $K$ and the controller, such that, for all $K\ge1$,
$q\in\{0,K\}$, $t,r\in[0,T]$, and $e,e'\in E$,
\begin{align*}
 \sup_{r,e}\|\Phi^q(r,e)\|&\le H,\\
 \|\mu(t,e)-\mu(t,e')\|_1
   +\|\Phi^q(r,e)-\Phi^q(r,e')\|
 &\le L|e-e'|.
\end{align*}
\end{assumption}

\begin{assumption}[Frozen controller and dynamics]
\label{ass:ma-dynamics}
All coefficient maps are measurable. The map $G$ is uniformly
Lipschitz in $\mathbf{s}$, with $\sup_t|G(t,0)|<\infty$.
For each regime $i$, uniformly in $t$,
\begin{align}
 &|b_{\mathbf{S}}(t,\mathbf{s},i,\pi,\ell)-b_{\mathbf{S}}(t,\mathbf{s}',i,\pi',\ell')|
  +\|\Sigma_{\mathbf{S}}(t,\mathbf{s},i,\pi,\ell)-\Sigma_{\mathbf{S}}(t,\mathbf{s}',i,\pi',\ell')\|_F
 \notag\\
 &\hspace{15mm}\le L\bigl(|\mathbf{s}-\mathbf{s}'|+|\pi-\pi'|+\|\ell-\ell'\|_1\bigr),
       \label{eq:ma-coefficient-lip}\\
 &\notag|b_{\mathbf{S}}(t,0,i,\pi,0)|+\|\Sigma_{\mathbf{S}}(t,0,i,\pi,0)\|_F\le L.
       \label{eq:ma-coefficient-growth}
\end{align}
The jump amplitudes are state- and control-independent, with
\begin{equation}
\label{eq:ma-bound-jump}
    \sum_j\int|m|^4\rho_j(dm)<\infty.
\end{equation}
The open-loop and feedback systems considered below admit unique
nonexplosive strong solutions and satisfy
\begin{equation}
 \sup_{K\ge1}\sup_{q\in\{0,K\},\pi}\mathbb E\left[
    \sup_{t\le T}|\mathbf{S}_t^{q,\pi}|^4+\|N_T^{q,\pi}\|_1^4
       +\int_0^T\|\lambda_t^{q,\pi}\|_1^4\,dt\right]<\infty.
 \label{eq:ma-moments}
\end{equation}
\end{assumption}

\begin{remark*}[Sufficient conditions]
Suppose Assumption~\ref{ass:ma-kernel} and
\eqref{eq:ma-coefficient-lip}--\eqref{eq:ma-bound-jump} hold,
$\mathbb E[|\mathbf S_0|^4]<\infty$, and controls take values in a compact set with $0\le\nu\le\bar\nu<\infty$. For feedback controls, assume that after substituting the policy into
both the coefficients and the intensities, the resulting drift and
diffusion are locally Lipschitz in $(X,R)$ between jumps, for each fixed
regime and past incident history, uniformly in time on bounded sets.
These conditions suffice for unique nonexplosive strong solutions and
\eqref{eq:ma-moments}, uniformly in $K$ and the policy. This is a bounded-kernel variant of the sufficient conditions in
\cite[Proposition~3.6]{bielecki2026continuoustimereinforcementlearningcontrolled},
allowing intensity-dependent diffusion and state- and control-independent
marks with finite fourth moments. No subcriticality condition is needed
on the finite horizon $[0,T]$.
\end{remark*}

\begin{assumption}[Rewards]
\label{ass:ma-reward}

Let $r_0(t,\mathbf{s},i,\pi)=f(t,x)-c(t,\mathbf{s},i,\pi)$, where the operating cost includes
the effective AI action. For every regime $i$,
\begin{align*}
 |r_0(t,\mathbf{s},i,\pi)-r_0(t,\mathbf{s}',i,\pi')|+|g(x)-g(x')|&\le L(1+|\mathbf{s}|+|\mathbf{s}'|)(|\mathbf{s}-\mathbf{s}'|+|\pi-\pi'|),\\
 |r_0(t,0,i,\pi)|+|g(0)|&\le L.
\end{align*}
\end{assumption}

\paragraph{Equivalent expected objective.}
Put $m_{2,j}^X=\int|m_X|^2\rho_j(dm)$ and define
\begin{equation}
 \begin{split}
 \mathfrak r(t,\mathbf{s},i,\pi,\ell)
 ={}&r_0(t,\mathbf{s},i,\pi)-\frac{\kappa(i)}2\nu^2-\chi_{i,1-i}\nu-\frac\eta2\left(
     \|\Sigma_X(t,\mathbf{s},i,\pi,\ell)\|^2+\sum_jm_{2,j}^X\ell_j
                      \right).
 \end{split}
 \label{eq:ma-effective-reward}
\end{equation}
Compensation of the incident and switching measures gives exactly
\begin{equation}
 J^{q,\mathrm{rel}}(\pi)
 =\mathbb E\left[g(X_T^q)
    +\int_0^T\mathfrak r(t,\mathbf{S}_{t-}^q,I_{t-}^q,\pi_t^q,\lambda_t^q)\,dt
              \right].
 \label{eq:ma-expected-objective}
\end{equation}
As a consequence of Assumption \ref{ass:ma-reward} for any fixed $i\in\{0,1\}$ we have
\begin{equation*}
 \begin{split}
 &|\mathfrak r(t,\mathbf{s},i,\pi,\ell)-\mathfrak r(t,\mathbf{s}',i,\pi',\ell')|\\
 &\quad\le C(1+|\mathbf{s}|+|\mathbf{s}'|+\|\ell\|_1+\|\ell'\|_1)
       (|\mathbf{s}-\mathbf{s}'|+|\pi-\pi'|+\|\ell-\ell'\|_1).
 \end{split}
 \label{eq:ma-effective-reward-lip}
\end{equation*}

\subsection{Filter representation and inherited Hawkes estimates}
\label{app:ma-inherited}

For any incident history $N^j$, $j\in\{S,A\}$ we define
\[
 Z_t^{K,\ell,j}
 =\int_{(0,t]}e^{-\beta\ell(t-s)}\,dN_s^j,
 \qquad \ell=1,\ldots,K.
\]
As in \cite[Section~3.2]{bielecki2026continuoustimereinforcementlearningcontrolled},
\[
 dZ_t^{K,\ell,j}=-\beta\ell Z_t^{K,\ell,j}\,dt+dN_t^j,
 \qquad
 \boldsymbol\lambda_t^K=\mu(t,\bar e_t^K)
       +\sum_{\ell=1}^KQ_{K,\ell}(\bar e_t^K)
                        Z_{t-}^{K,\ell}.
\]

For later use, set
$\mathfrak b_K(t):=\int_{(0,t)}d_K(t-s)\,d\|N_s\|_1$.
Since $d_K\le2H$, pathwise
\[
 \int_0^T\mathfrak b_K(t)\,dt\le\epsilon_K\|N_T\|_1,
 \qquad
 \sup_{t\le T}\mathfrak b_K(t)\le2H\|N_T\|_1.
\]
Consequently, uniformly over the controls considered,
\begin{equation}
 \mathbb E\left[\int_0^T
       \bigl(\mathfrak b_K(t)+\mathfrak b_K(t)^2\bigr)\,dt\right]
 \le C\epsilon_K.
 \label{eq:ma-residual}
\end{equation}
In particular, evaluating $\Phi^K$ on the original incident history,
with the original effort, approximates its intensity in
$L^1(\mathbb P\otimes dt)$ with error at most $C\epsilon_K$.

\begin{lemma}[Common-control intensity estimate]
\label{lem:ma-inherited}
If $\pi^K=\pi$ is the same predictable control process, then
\begin{equation}
 A_K:=\sup_\pi\mathbb E\left[\int_0^T
             \|\lambda_t^{K,\pi}-\lambda_t^{\pi}\|_1\,dt\right]
       \le C\epsilon_K.
 \label{eq:ma-intensity-L1}
\end{equation}
\end{lemma}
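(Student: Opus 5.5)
The plan is a Grönwall argument on $D_t:=\mathbb E\bigl[\|\boldsymbol\lambda_t^{K}-\boldsymbol\lambda_t\|_1\bigr]$ under the common Poisson embedding of Section~\ref{app:ma-model}. The structural fact I rely on is that both systems share $\Pi^S,\Pi^A$, so each incident measure is obtained by thinning the same random measure with a different threshold. Then
\[
\mathbb E\Bigl[\int_{(0,t)} h_s\,d\|N^K-N\|_{\mathrm{TV},s}\Bigr]
\le \mathbb E\Bigl[\int_0^t h_s\,\|\lambda_s^K-\lambda_s\|_1\,ds\Bigr]
\]
for predictable $h\ge0$. Here $|dN^{K}-dN|$ denotes the counting measure of points accepted by exactly one of the two thinnings, whose compensator is $|\lambda^{K,j}_s-\lambda^j_s|\,ds$ in each coordinate.

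The first step uses the fact that the control is common, $\pi^K=\pi$. I will first reduce to the case where the effort entering the intensities coincides, which requires handling the regimes: $\bar e^K_t$ and $\bar e_t$ differ only when $I^K_{t-}\neq I_{t-}$. Since the switching measure $\Pi^{\mathrm{sw}}$ is shared and $\nu$ is the same process, I expect the regimes to agree pathwise, because $J^K=J$ when the intensity $\nu_t$ is identical. If $\nu$ is instead a feedback in the state, an extra term $L\,\mathbb P(I^K_{t-}\neq I_{t-})\cdot \mathrm{diam}(E)$ appears and must be controlled in the same way by a thinning bound. In the common open-loop-control reading, which I take as the intended one, $\bar e^K=\bar e$, so the baseline terms cancel.

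The second step decomposes the difference:
\[
\lambda^K_t-\lambda_t=\int_{(0,t)}\bigl(\Phi^K-\Phi\bigr)(t-s,\bar e_t)\,dN_s+\int_{(0,t)}\Phi^K(t-s,\bar e_t)\,d(N^K-N)_s .
\]
The first term is bounded in norm by $\mathfrak b_K(t)$, and the second by $H$ times the total variation of $N^K-N$ on $(0,t)$. Taking expectations and using the thinning bound with $h\equiv H$ gives
\[
D_t\le \mathbb E[\mathfrak b_K(t)]+H\int_0^t D_s\,ds .
\]
One subtlety is that the kernel is evaluated at the effort at time $t$ rather than at $s$. This is harmless because the bound $\|\Phi^K\|\le H$ is uniform in $e$, and the integrand in $s$ stays predictable once $t$ is frozen.

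The final step applies Grönwall, which gives $D_t\le \mathbb E[\mathfrak b_K(t)]+H\int_0^t e^{H(t-s)}\mathbb E[\mathfrak b_K(s)]\,ds$. Integrating over $[0,T]$ and using \eqref{eq:ma-residual} yields $\int_0^T D_t\,dt\le (1+HTe^{HT})\,C\epsilon_K$. The constant depends only on $H$, $T$ and the uniform moment bound \eqref{eq:ma-moments}, through $\mathbb E\|N_T\|_1$ in \eqref{eq:ma-residual}, so taking the supremum over $\pi$ gives \eqref{eq:ma-intensity-L1}.

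The main obstacle is justifying the thinning bound rigorously. This requires predictability of $\lambda^K$ and $\lambda$, which follows from the left limits in the definition, and enough integrability, from \eqref{eq:ma-moments}, for the compensator identity to hold as a true expectation rather than only locally.
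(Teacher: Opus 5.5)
Your proposal is correct and follows essentially the paper's route: the paper also first notes that the shared switching measure and common $\nu$ force $J^K=J$ and $I^K=I$, hence $\bar e^K=\bar e$. It then invokes the intensity comparison of \cite{bielecki2026continuoustimereinforcementlearningcontrolled} (the same split into the residual $\mathfrak b_K$ plus a thinning-controlled history discrepancy) together with \eqref{eq:ma-residual}. The only difference is that the paper closes the resulting Volterra inequality with the uniform resolvent bound of the cited Lemma~3.1, whereas you use a constant-kernel Gr\"onwall step. Your step is legitimate here because Assumption~\ref{ass:ma-kernel} gives $\sup_{r,e}\|\Phi^K(r,e)\|\le H$ uniformly in $K$.
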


\begin{proof}
Fix a predictable control $\pi^K=\pi$.
Note that 
$J^K=J$ and $I^K=I$, hence $\bar e^K=\bar e$.
The intensity comparison in
\cite[proof of Theorem~3.10, Step~2]
{bielecki2026continuoustimereinforcementlearningcontrolled}
therefore applies with the current control replaced by $\bar e$ and no
baseline state-error term, since $\mu(t,e)$ is independent of $\mathbf S$.
Together with \eqref{eq:ma-residual} and the uniform resolvent
bound in \cite[Lemma~3.1]
{bielecki2026continuoustimereinforcementlearningcontrolled}, this gives
\eqref{eq:ma-intensity-L1}. The constants are independent of $K$ and
$\pi$, so the estimate is uniform over the admissible controls.
\end{proof}

\subsection{Full-information approximation}
\label{app:ma-full}

Let $\mathbb F^{\mathrm{full}}$ be the usual augmentation
of the filtration generated by the common initial variables
$(\mathbf S_0,I_0)$ and the common driving noises. Let $\mathcal A^{\mathrm{full}}$ contain all
$U$-valued $\mathbb F^{\mathrm{full}}$-predictable controls.
Define
\[
 V^{q,\mathrm{full}}
   :=\sup_{\pi\in\mathcal A^{\mathrm{full}}}J^{q,\mathrm{rel}}(\pi).
\]
These are values of the common strong full-information formulation,
not of the smaller observable policy class below. The following theorem extends \cite[Theorem 3.10]{bielecki2026continuoustimereinforcementlearningcontrolled} to turbulence risk adding the intensity process into the volatility of the controlled process $X$.

\begin{theorem}[Full-information convergence]
\label{thm:ma-full}
Under Assumptions~\ref{ass:ma-kernel-approximation}--\ref{ass:ma-reward}, as $K\to\infty$,
\begin{align}
 \sup_{\pi\in\mathcal A^{\mathrm{full}}}\mathbb E\left[
    \sup_{t\le T}|\mathbf{S}_t^{K,\pi}-\mathbf{S}_t^{\pi}|^2
       +\int_0^T\|\lambda_t^{K,\pi}-\lambda_t^{\pi}\|_1^2\,dt
        \right]&\longrightarrow0,
       \label{eq:ma-full-process}\\
 \notag\Delta_K:=\sup_{\pi\in\mathcal A^{\mathrm{full}}}
       |J^{K,\mathrm{rel}}(\pi)-J^{\mathrm{rel}}(\pi)|&\longrightarrow0.
\end{align}
Consequently $|V^{K,\mathrm{full}}-V^{\mathrm{full}}|\le\Delta_K\to0$.
\end{theorem}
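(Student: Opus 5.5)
The proof couples the two systems through the common strong construction of Section~\ref{app:ma-model}. Fix $\pi\in\mathcal A^{\mathrm{full}}$ and use it in both models, so that $\pi^K=\pi$. The Poisson embedding $\Pi$ and the Brownian motion $B$ are shared. The switching measure $\Pi^{\mathrm{sw}}$ is thinned by the same intensity $\nu_t$, which is an $\mathbb F^{\mathrm{full}}$-predictable process not depending on $K$. Hence $J^K=J$, $I^K=I$ and $\bar e^K=\bar e$ pathwise. This is the same observation as in the proof of Lemma~\ref{lem:ma-inherited}, which then gives $A_K\le C\epsilon_K$ uniformly in $\pi$. The only discrepancy between the two systems therefore comes from the incident counts, and through them from the intensities and from $\lambda$ entering $\Sigma_{\mathbf S}$.

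\textbf{Step 1: $L^2$ intensity bound.} I will upgrade $A_K$ to a bound on $\mathbb E\int_0^T\|\lambda^K_t-\lambda_t\|_1^2\,dt$. Under the common embedding, the two counting processes differ only on the set $\{\theta\text{ between }\lambda^{K,j}_t\text{ and }\lambda^{j}_t\}$. So $D_t:=\|N^K_t-N_t\|_1$ is a counting process with compensator $\int_0^t\|\lambda^K_s-\lambda_s\|_1\,ds$, and $\mathbb E[D_T]\le A_K$. I decompose the intensity difference pathwise as
\[
\|\lambda^K_t-\lambda_t\|_1\le \mathfrak b_K(t)+H\,D_{t-}.
\]
The first term comes from evaluating $\Phi^K$ instead of $\Phi$ on the original history. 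The second term comes from $\Phi^K$ acting on the extra or missing points, together with $\sup\|\Phi^K\|\le H$. Squaring gives two contributions. The $\mathfrak b_K^2$ part is bounded by $C\epsilon_K$ via \eqref{eq:ma-residual}. For the other part, $D^2\le D\cdot(\|N^K_T\|_1+\|N_T\|_1)$, and Cauchy--Schwarz with the fourth moments in \eqref{eq:ma-moments} gives $\mathbb E[D_T^2]\le C\,\mathbb E[D_T]^{1/2}$. I will instead try a Gronwall-type argument on $\mathbb E[D_T^2]$ directly, using the compensator of $D^2$, which is $\int(2D_{s-}+1)\|\lambda^K_s-\lambda_s\|_1\,ds$. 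Either route gives a quantity tending to $0$ uniformly in $\pi$, possibly at rate $\epsilon_K^{1/2}$.

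\textbf{Step 2: state stability.} I apply It\^o/BDG to $\mathbf S^K-\mathbf S$, with the same control and regime in both systems. The drift and diffusion differences are bounded by \eqref{eq:ma-coefficient-lip} as $L(|\mathbf S^K-\mathbf S|+\|\lambda^K-\lambda\|_1)$. This is the new feature relative to \cite[Theorem~3.10]{bielecki2026continuoustimereinforcementlearningcontrolled}: $\gamma$ depends on $\lambda$, and $R$'s volatility depends on $L_t$. The jump terms differ by $\int m\,(M^{K,j}-M^{j})(dt,dm)$. Since the marks are state-independent, its second moment is at most $C\,\mathbb E\int_0^T\|\lambda^K-\lambda\|_1\,dt\le CA_K$, using \eqref{eq:ma-bound-jump}. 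Gronwall then yields
\[
\mathbb E\sup_{t\le T}|\mathbf S^K_t-\mathbf S_t|^2\le C\Big(A_K+\mathbb E\!\int_0^T\|\lambda^K_t-\lambda_t\|_1^2\,dt\Big)\to0
\]
uniformly in $\pi$, which is \eqref{eq:ma-full-process}.

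\textbf{Step 3: objectives and values.} I use the representation \eqref{eq:ma-expected-objective} and the local Lipschitz bound on $\mathfrak r$ with $\pi'=\pi$, together with the analogous bound on $g$. The difference $|J^{K,\mathrm{rel}}(\pi)-J^{\mathrm{rel}}(\pi)|$ is then bounded by
\[
C\,\mathbb E\Big[(1+\sup|\mathbf S|+\sup|\mathbf S^K|+\|\lambda\|_1+\|\lambda^K\|_1)\,(|\mathbf S^K-\mathbf S|+\|\lambda^K-\lambda\|_1)\Big],
\]
integrated in time. Cauchy--Schwarz with \eqref{eq:ma-moments} and Step~2 gives $\Delta_K\to0$. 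Finally, the elementary inequality $|\sup_\pi J^K-\sup_\pi J|\le\sup_\pi|J^K-J|$ gives the value bound.

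\textbf{Main obstacle.} The hardest part is Step~1, because the diffusion now depends on $\lambda$, so the $L^1$ estimate of Lemma~\ref{lem:ma-inherited} no longer suffices. What is needed is a uniform $L^2$ control of the intensity gap driven by the mismatch counting process. I expect to close it by interpolating between $L^1$ smallness and the uniform fourth moments, accepting a rate like $\epsilon_K^{1/2}$ rather than $\epsilon_K$.
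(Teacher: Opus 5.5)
Your proposal is correct and follows essentially the paper's proof: the same coupling with $I^K=I$, $A_K\le C\epsilon_K$ from Lemma~\ref{lem:ma-inherited}, an $L^2$ upgrade of the intensity gap, Gronwall on $\mathbf S^K-\mathbf S$, and Cauchy--Schwarz on the objective; the one difference is Step~1, which the paper does in a single line by H\"older applied directly to $\|\lambda^K_t-\lambda_t\|_1$ against the fourth moments in \eqref{eq:ma-moments}, giving $B_K:=\sup_\pi\mathbb E\int_0^T\|\lambda^K_t-\lambda_t\|_1^2\,dt\le CA_K^{2/3}$, so your detour through the mismatch count $D$ is not needed (and $D$ would have to be the symmetric-difference count, not $\|N^K_t-N_t\|_1$, for your pathwise bound to hold). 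One slip in Step~2: since the marks are not centred, the uncompensated jump difference splits into a compensated martingale bounded by $CA_K$ and a drift $\sum_j\bar m_j\int_0^t(\lambda^{K,j}_s-\lambda^j_s)\,ds$ that is bounded only by $CB_K$ (the paper's $\mathcal M^K_t$ and $\mathcal C^K_t$); this is harmless because that term already appears in your final Gronwall bound.
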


\begin{proof}
All suprema over $\pi$ below are over $\mathcal A^{\mathrm{full}}$.
Fix a predictable control $\pi$. In this proof, we use the notation $C>0$ for a generic constant which can varies lines to lines and independent of the choice of $\pi$ and $K$. The switching construction gives
$I^K=I$. For the sake of simplicity we remove the superscript $\pi$ in the notation and write
$\Delta\mathbf S=\mathbf S^K-\mathbf S$ and
$\Delta\lambda=\lambda^K-\lambda$.
By Lemma~\ref{lem:ma-inherited}, $A_K\le C\epsilon_K\to0$.
Equation \eqref{eq:ma-moments} gives
$\sup_{K,\pi}\mathbb E[\int_0^T\|\Delta\lambda_t\|_1^4\,dt]<\infty$.
H\"older's inequality therefore yields
\begin{equation}
 B_K:=\sup_\pi\mathbb E\left[\int_0^T\|\Delta\lambda_t\|_1^2\,dt\right]
 \le C A_K^{2/3}\le C\epsilon_K^{2/3}\longrightarrow0.
 \label{eq:ma-intensity-L2}
\end{equation}
We define for $j\in \{S,A\}$
\begin{align*}
 \Delta b_s
 &:=b_{\mathbf S}(s,\mathbf S_{s-}^K,I_{s-},\pi_s,\lambda_s^K)
   -b_{\mathbf S}(s,\mathbf S_{s-},I_{s-},\pi_s,\lambda_s),\\
 \Delta\Sigma_s
 &:=\Sigma_{\mathbf S}(s,\mathbf S_{s-}^K,I_{s-},\pi_s,\lambda_s^K)
   -\Sigma_{\mathbf S}(s,\mathbf S_{s-},I_{s-},\pi_s,\lambda_s),\\
 \delta_j^K(s,\theta)
 &:=\mathbf1_{\{\theta\le\lambda_s^{K,j}\}}
     -\mathbf1_{\{\theta\le\lambda_s^j\}},\qquad
 \widetilde\Pi^j(ds,d\theta,dm)
 :=\Pi^j(ds,d\theta,dm)-ds\,d\theta\,\rho_j(dm).
\end{align*}
From \eqref{eq:ma-dynamics} we get
\begin{equation*}
 \Delta\mathbf S_t
 =\mathcal D_t^K+\mathcal B_t^K+\mathcal M_t^K+\mathcal C_t^K, \label{eq:Delta-S}
\end{equation*}
where
\begin{align*}
 \mathcal D_t^K&:=\int_0^t\Delta b_s\,ds,
 \,\mathcal B_t^K:=\int_0^t\Delta\Sigma_s\,dB_s,\,
 \mathcal M_t^K:=\sum_j\int_0^t\int_{\mathbb R_+\times\mathcal M_j}
       m\delta_j^K(s,\theta)\,\widetilde\Pi^j(ds,d\theta,dm),\\
 \mathcal C_t^K&:=\sum_j\bar m_j\int_0^t\Delta\lambda_s^j\,ds,
 \quad\bar m_j:=\int_{\mathcal M_j}m\rho_j(dm).
\end{align*}
Set
$F_K^\pi(t):=\mathbb E[\sup_{r\le t}|\Delta\mathbf S_r|^2]$, hence 

\[ F_K^\pi(t)\leq 4(\mathbb E[\sup_{r\le t}|\mathcal D^K_r|^2]+\mathbb E[\sup_{r\le t}|\mathcal B^K_r|^2]+\mathbb E[\sup_{r\le t}|\mathcal M^K_r|^2]+\mathbb E[\sup_{r\le t}|\mathcal C^K_r|^2]). \]
Note that $b_{\mathbf S}$ does not depend on the intensity.
Similarly
of \cite[proof of Lemma~3.8, Equation (17)]
{bielecki2026continuoustimereinforcementlearningcontrolled}
we get $|\Delta b_s|\le L|\Delta\mathbf S_{s-}|$.
From Cauchy-Schwarz Inequality we have
\begin{equation*}
 \begin{aligned}
 \mathbb E\left[\sup_{r\le t}|\mathcal D_r^K|^2\right]
 &\le t\,\mathbb E\left[\int_0^t|\Delta b_s|^2\,ds\right]\\
 &\le TL^2\int_0^tF_K^\pi(s)\,ds.
 \end{aligned}
 \label{eq:ma-drift-estimate}
\end{equation*}

Then, by adapting the martingale estimate in
\cite[proof of Lemma~3.8, equations~(18)--(19)]
{bielecki2026continuoustimereinforcementlearningcontrolled}
in $L^2$ together with Doob's inequality, then It\^o's isometry,
we get
\begin{equation*}
 \begin{aligned}
 \mathbb E\left[\sup_{r\le t}|\mathcal B_r^K|^2\right]
 &\le4\,\mathbb E\left[\int_0^t\|\Delta\Sigma_s\|_F^2\,ds\right]\\
 &\le8L^2\int_0^tF_K^\pi(s)\,ds+8L^2 B_K.
 \end{aligned}
 \label{eq:ma-brownian-estimate}
\end{equation*}
Note that the additional $B_K$ term accounts for the intensity-dependent diffusion specific to our problem compared with \cite{bielecki2026continuoustimereinforcementlearningcontrolled}.

The thinning identity underlying
\cite[proof of Lemma~3.8, equation~(14)]
{bielecki2026continuoustimereinforcementlearningcontrolled} gives
\[
 \int_{\mathbb R_+}|\delta_j^K(s,\theta)|^2\,d\theta
 =|\Delta\lambda_s^j|,
\]
since $|\delta_j^K|^2=|\delta_j^K|$.
Doob's inequality and the marked-Poisson isometry together with \eqref{eq:ma-bound-jump} therefore give
\begin{equation*}
 \begin{aligned}
 \mathbb E\left[\sup_{r\le t}|\mathcal M_r^K|^2\right]
 &\le4\sum_j\left(\int_{\mathcal M_j}|m|^2\rho_j(dm)\right)
          \mathbb E\left[\int_0^t|\Delta\lambda_s^j|\,ds\right]\\
 &\le C A_K.
 \end{aligned}
 \label{eq:M-estimate}
\end{equation*}

Then, using $\int_{\mathbb R_+}\delta_j^K(s,\theta)\,d\theta
=\Delta\lambda_s^j$ and
Cauchy-Schwarz Inequality, we get
\begin{equation*}
 \begin{aligned}
 \mathbb E\left[\sup_{r\le t}|\mathcal C_r^K|^2\right]
 &\le t\left(\sum_j|\bar m_j|^2\right)
       \mathbb E\left[\int_0^t\|\Delta\lambda_s\|_1^2\,ds\right]\\
 &\le C B_K.
 \end{aligned}
 \label{eq:C-estimate}
\end{equation*}

Combining the four estimates with
$|v_1+v_2+v_3+v_4|^2\le4\sum_{k=1}^4|v_k|^2$ gives
\[
 F_K^\pi(t)\le C\int_0^tF_K^\pi(s)\,ds+C(A_K+B_K).
\]
 Gronwall's inequality
therefore implies
\[
 \sup_\pi F_K^\pi(T)\le C(A_K+B_K)\longrightarrow0.
\]
Together with \eqref{eq:ma-intensity-L2}, this proves
\eqref{eq:ma-full-process}.
Finally, adapting  
\cite[proof of Theorem~3.10, Step~5]
{bielecki2026continuoustimereinforcementlearningcontrolled}, with the
effective reward \eqref{eq:ma-effective-reward}, Assumption~\ref{ass:ma-reward},
\eqref{eq:ma-moments}, and Cauchy-Schwarz' Inequality, we get
\[
 |J^{K,\mathrm{rel}}(\pi)-J^{\mathrm{rel}}(\pi)|
 \le C\bigl(F_K^\pi(T)+B_K\bigr)^{1/2}.
\]
Thus $\Delta_K\le C(\sup_\pi F_K^\pi(T)+B_K)^{1/2}\to0$, therefore
$|V^{K,\mathrm{full}}-V^{\mathrm{full}}|\le\Delta_K\to0$.
\end{proof}

\subsection{Information loss under current-observation feedback}
\label{app:ma-observable}

Fix $K\ge1$ and a deterministic frozen recommendation (black-box feedback from the AI decision) $G$ defined by  $G(t,\mathbf S_t):=\widehat a_{t}^{\rm raw}=a^*(t,X_t)+R_t$, and impose
Assumptions~\ref{ass:ma-kernel-approximation}--\ref{ass:ma-reward}.
We use the same notations that the one introduced in
Sections~\ref{app:ma-model}--\ref{app:ma-inherited}.
Let $O_t^K$ be the current observation vector, \textit{i.e.} $O_t=(t,X_t,I_t,Z_t,N_t^S,N_t^A,\mathbf1_{\{I_t=1\}}\widehat a^{\rm raw})$.
The raw recommendation
$I_t^K G(t,\mathbf S_t^K)$ is observed, but not $R_t^K$ separately.
Assume this raw output has a c\`adl\`ag version whose left limit is
$I_{t-}^K G(t,\mathbf S_{t-}^K)$,
$dt\otimes\mathbb P^\pi$-almost everywhere.

Let $\mathcal A^{K,\mathrm{obs}}$ consist of deterministic Borel
feedbacks
\[
 \pi_t=f(t,O_{t-}^K)\in U
\]
whose closed-loop systems are admissible. Define
\[
 V^{K,\mathrm{obs}}
 :=\sup_{\pi\in\mathcal A^{K,\mathrm{obs}}}J^{K,\mathrm{rel}}(\pi).
\]
Note that $ \mathcal A^{K,\mathrm{obs}}\subseteq\mathcal A^{\mathrm{full}}$. In particular, both classes use \eqref{eq:ma-expected-objective}, the same initial law,
and the same continuous-time decision convention but $\mathcal A^{K,\mathrm{obs}}$ does not obsevred $R$ directly.

Write $\lambda=\lambda_i^K(t,z,\pi)$ for the intensity in
Section~\ref{app:ma-inherited}, evaluated at $(i,z,\pi)$.
For a smooth pair $\varphi=(\varphi_0,\varphi_1)$, define the
generator by
\begin{equation*}
 \begin{split}
 (\mathcal L^{K,\pi}\varphi)_i(t,\mathbf s,z)
 :={}&b_{\mathbf S}\cdot D_{\mathbf s}\varphi_i
 +\frac12\operatorname{tr}\!\left(
   \Sigma_{\mathbf S}\Sigma_{\mathbf S}^{\!\top}
                        D_{\mathbf s}^2\varphi_i\right)
 -\sum_{\ell=1}^K\sum_{j\in\{S,A\}}
                 \beta\ell z^{\ell,j}\partial_{z^{\ell,j}}\varphi_i\\
 &+\sum_{j\in\{S,A\}}\lambda^j\int_{\mathcal M_j}
   [\varphi_i(t,\mathbf s+m,z+\delta_j)
                  -\varphi_i(t,\mathbf s,z)]\,\rho_j(dm)\\
 &+\nu[\varphi_{1-i}(t,\mathbf s,z)-\varphi_i(t,\mathbf s,z)]+\mathfrak r(t,\mathbf s,i,\pi,\lambda_i^K(t,z,\pi)),
 \end{split}
 \label{eq:ig-generator}
\end{equation*}
with $\mathfrak r$ from \eqref{eq:ma-effective-reward}. 

\begin{assumption}\label{ass:estimater}
    We assume that there exists an estimator $h$ for which is a deterministic Borel-measurable
map $h(t,o)\in\mathbb R$, with associated estimate
\[
 \widehat R_t^{K,h}:=h(t,O_{t-}^K).,\] 
such that for any policy $\pi\in \mathcal A^{K,obs}$
\[
 \mathbb E^{\pi}\int_0^T
       |R_{t-}^K-\widehat R_t^{K,h}|^2\,dt<\infty.
\]
\end{assumption}

\begin{assumption}[Verification, sensitivity, and feedback selection]
\label{ass:ig-verification}
For the fixed $K$, the following hold.
\begin{enumerate}
\item[(i)] There exists a unique pair of solutions $(v^K_0,v_1^K)$ which are continuously differentiable with respect to $(t,z)$ and twice continuously differentiable with respect to
$(x,r)$, to the coupled integro-partial HJB equation for $i\in \{0,1\}$
\begin{equation*}
 \partial_t v_i^K+\sup_{u\in U}\mathcal L^{K,u} v^K_i(\cdot;\pi)=0,
 \qquad v_i^K(T,\mathbf s,z)=g(x),
 \label{eq:ig-HJB}
\end{equation*}
such that
\[
 V^{K,\mathrm{full}}
 =v_{I_0}^K(0,\mathbf S_0,\mathcal Z_0^K).
\] Moreover, $v_i^K$ is continuous up to the terminal time, and there
exists a finite constant $C_K$ such that
\[
|v_i^K(t,s,z)|
\le C_K\bigl(1+|s|^2+\|z\|^2\bigr),
\qquad
(t,s,z)\in[0,T]\times\mathbb R^2\times\mathcal Z_K,
\quad i\in\{0,1\}.
\]

\item[(ii)] The Hamiltonian is Borel measurable and continuous in the
control; its control suprema are measurable. For some $L_K<\infty$,
\begin{equation*}
 |\mathcal L^{K,\pi} v^K_i(t,x,z,r;u)
       -\mathcal L^{K,\pi} v^K_i(t,x,z,r';u)|
 \le L_K|r-r'|,
 \qquad u\in U.
 \label{eq:ig-sensitivity}
\end{equation*}
\item[(iii)] There exists a deterministic Borel feedback $\hat\pi^{\rm obs}_t$ such
that $\hat\pi^{\rm obs}_t=\hat\pi^{*,\rm obs}(t,O_{t-}^K)$ belongs to
$\mathcal A^{K,\mathrm{obs}}$ and satisfies
\begin{equation*}
 \hat\pi^{*,\rm obs}(t,o)\in\arg\max_{u\in U}
 \mathcal L^{K,\pi} v^K_i
       (t,x,z,h(t,o),u).
 \label{eq:ig-selector}
\end{equation*}

\end{enumerate}
\end{assumption}
\begin{remark}
    Assumption \ref{ass:estimater} is structural to our problem and depends on the capacity of the supervisor to monitor AI error estimation. Assumption \ref{ass:ig-verification} is a classical-solution (verification) hypothesis: when a classical solution exists, the argument of \cite{bielecki2026continuoustimereinforcementlearningcontrolled} identifies it with $V^{K,full}$. Existence for the coupled system is not addressed here.\end{remark}

We now set
\begin{equation*}
 B_h^K(t):=\mathbb E^{\hat\pi^{*,\rm obs}}|R_{t-}^K-\widehat R_t^{K,h}|^2.
 \label{eq:ig-reconstruction-error}
\end{equation*}
\begin{remark}
Conditioning on the current vector $O_{t-}^K$, rather than its history,
gives the orthogonal decomposition
\begin{equation*}
 \begin{split}
 B_h^K(t)
 ={}&\mathbb E^{\hat\pi^{*,\rm obs}}\operatorname{Var}^{\hat\pi^{*,\rm obs}}
                        (R_{t-}^K\mid O_{t-}^K)+\mathbb E^{\hat\pi^{*,\rm obs}}\left|
       \mathbb E^{\hat\pi^{*,\rm obs}}[R_{t-}^K\mid O_{t-}^K]
                          -h(t,O_{t-}^K)\right|^2.
 \end{split}
 \label{eq:ig-MSE-variance}
\end{equation*}
Thus $B_h^K(t)$ equals the expected conditional variance precisely when
$h(t,O_{t-}^K)$ is the conditional mean under the comparison policy.
\end{remark}
\begin{theorem}[Current-feedback value bound for fixed $G$]
\label{thm:ig-variance}
Under the assumptions above, there exists a function $L_K>0$ such that
\begin{equation*}
 \begin{split}
 0\le V^{K,\mathrm{full}}-V^{K,\mathrm{obs}}
 &\le 2L_K\int_0^T\sqrt{B_h^K(t)}\,dt.
 \end{split}
 \label{eq:ig-value-bound}
\end{equation*}
\end{theorem}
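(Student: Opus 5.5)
The lower bound $V^{K,\mathrm{full}}\ge V^{K,\mathrm{obs}}$ comes from the inclusion $\mathcal A^{K,\mathrm{obs}}\subseteq\mathcal A^{\mathrm{full}}$. Both values are suprema of the same functional $J^{K,\mathrm{rel}}$ from \eqref{eq:ma-expected-objective}, under the same initial law and the same common Poisson embedding. So the whole proof is the upper bound. The plan is to compare the full-information value, represented by $v^K_{I_0}(0,\mathbf S_0,\mathcal Z^K_0)$ from Assumption~\ref{ass:ig-verification}(i), with the performance of the single observable feedback $\hat\pi^{\rm obs}$ of Assumption~\ref{ass:ig-verification}(iii). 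Since $J^{K,\mathrm{rel}}(\hat\pi^{\rm obs})\le V^{K,\mathrm{obs}}$, it suffices to show
\[
v^K_{I_0}(0,\mathbf S_0,\mathcal Z^K_0)-J^{K,\mathrm{rel}}(\hat\pi^{\rm obs})\le 2L_K\int_0^T\sqrt{B_h^K(t)}\,dt .
\]

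\textbf{Step 1 (Itô--Dynkin along the observable closed loop).} I apply Itô's formula for jump-diffusions to $v^K_{I_t}(t,\mathbf S^K_t,Z^K_t)$ under $\mathbb P^{\hat\pi^{\rm obs}}$. The generator $\mathcal L^{K,\pi}$ already contains the drift, diffusion, filter decay, incident jumps $(\mathbf s+m,z+\delta_j)$, regime switches at rate $\nu$, and the running reward $\mathfrak r$. Hence, after localisation,
\[
\mathbb E\bigl[g(X_T^K)\bigr]-v^K_{I_0}(0,\cdot)=\mathbb E\int_0^T\bigl(\partial_tv^K_{I_{t-}}+\mathcal L^{K,\hat\pi}v^K_{I_{t-}}(\cdot;\hat\pi_t)-\mathfrak r\bigr)\,dt .
\]
To remove the localisation I use the quadratic growth of $v^K_i$, the fourth-moment bounds \eqref{eq:ma-moments}, and the fact that $\|Z\|\le\|N_T\|_1$ pathwise. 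Together these give uniform integrability of the stopped values and true-martingale compensated terms. Rearranging, I get
\[
J^{K,\mathrm{rel}}(\hat\pi^{\rm obs})-v^K_{I_0}(0,\cdot)=\mathbb E\int_0^T\bigl(\partial_t v^K_{I_{t-}}+\mathcal L^{K,\hat\pi_t}v^K_{I_{t-}}\bigr)(t,X_{t-},R_{t-},Z_{t-})\,dt .
\]
By the HJB equation, the integrand equals
\[
\mathcal L^{K,\hat\pi_t}v^K(t,x,z,R_{t-})-\sup_u\mathcal L^{K,u}v^K(t,x,z,R_{t-})\le0 .
\]

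\textbf{Step 2 (the estimate substitution).} Write $r=R_{t-}$ and $\hat r=h(t,O^K_{t-})$. Then
\[
\sup_u\mathcal L^{u}(r)-\mathcal L^{\hat\pi}(r)=\bigl[\sup_u\mathcal L^u(r)-\sup_u\mathcal L^u(\hat r)\bigr]+\bigl[\mathcal L^{\hat\pi}(\hat r)-\mathcal L^{\hat\pi}(r)\bigr],
\]
because the selector gives $\mathcal L^{\hat\pi}(\hat r)=\sup_u\mathcal L^u(\hat r)$. By Assumption~\ref{ass:ig-verification}(ii), each bracket is at most $L_K|r-\hat r|$; for the first one this is the standard fact that a supremum of $L_K$-Lipschitz functions is $L_K$-Lipschitz. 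The integrand deficit is therefore bounded by $2L_K|R^K_{t-}-\widehat R^{K,h}_t|$. Taking expectations and using Jensen's inequality $\mathbb E|Y|\le(\mathbb E Y^2)^{1/2}$ at each $t$ yields the integrand $2L_K\sqrt{B_h^K(t)}$. The resulting integral is finite by Assumption~\ref{ass:estimater} and Cauchy--Schwarz in $t$.

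\textbf{Main obstacle.} I expect the main difficulty to be rigorously justifying Step 1. Two points need care. First, the Itô argument needs $v^K$ to be $C^{1,2}$ with quadratic growth along a closed loop driven by a merely Borel feedback of the observation, which contains the raw recommendation. Here I rely on the admissibility of $\hat\pi^{\rm obs}$ in $\mathcal A^{K,\mathrm{obs}}$, meaning a unique strong solution exists, and on the left-limit convention for the observed raw output. This convention makes $\hat\pi_t$ predictable, so it is the control that actually enters the compensators. Second, the integrability of the jump terms $\lambda^j\int[v(\mathbf s+m,z+\delta_j)-v]\rho_j(dm)$ must be checked. I would bound these by $C_K(1+|\mathbf s|^2+\|z\|^2+|m|^2)\|\lambda\|_1$ and close with Hölder's inequality using the fourth moments in \eqref{eq:ma-moments} and \eqref{eq:ma-bound-jump}.
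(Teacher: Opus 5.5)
Your proposal is correct and follows the paper's proof essentially step for step. The shared steps are: the inclusion $\mathcal A^{K,\mathrm{obs}}\subseteq\mathcal A^{\mathrm{full}}$ for the lower bound; It\^o's formula along the closed loop of the selector $\hat\pi^{\rm obs}$, which writes $V^{K,\mathrm{full}}-J^{K,\mathrm{rel}}(\hat\pi^{\rm obs})$ as the expected Hamiltonian deficit; the same two-bracket split using the maximizing property at $\widehat R^{K,h}_t$, with Assumption~\ref{ass:ig-verification}(ii) applied to each bracket; and Cauchy--Schwarz at each $t$. Your explicit justification of the localization is somewhat more detailed than the paper's, which simply invokes localization and integrability; you use quadratic growth of $v^K$, the moments in \eqref{eq:ma-moments}, $Z^{K,\ell,j}_t\le N^j_t$, and H\"older on the jump terms.
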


\begin{proof}
We set
\[
\widehat{\mathbf S}_t^{K,h}:=(X_{t-}^K,\widehat R_t^{K,h}).
\]

We define the Hamiltonian
\[\mathcal H_i^{K}(t,\mathbf s,z):= \sup_{u\in U} \bigl(\mathcal L^{K,u}v^K\bigr)_{i}
    (t,\mathbf s,z),\]
    and
\[
\mathcal R_t^{K,h}
:=
\mathcal H^{K}_{I_{t-}^K}
    (t,\mathbf S_{t-}^K,Z_{t-}^K)
-
\bigl(\mathcal L^{K,\pi_t^{*,\mathrm{obs}}}v^K\bigr)_{I_{t-}^K}
    (t,\mathbf S_{t-}^K,Z_{t-}^K).
\]
Using the maximizing property at $\widehat{\mathbf S}_t^{K,h}$ gives
\begin{align*}
\mathcal R_t^{K,h}
={}&
\mathcal H^{K}_{I_{t-}^K}
    (t,\mathbf S_{t-}^K,Z_{t-}^K)
-
\mathcal H^{K}_{I_{t-}^K}
    (t,\widehat{\mathbf S}_{t-}^K,Z_{t-}^K)
\\
&+
\bigl(\mathcal L^{K,\pi_t^{*,\mathrm{obs}}}v^K\bigr)_{I_{t-}^K}
    (t,\widehat{\mathbf S}_t^{K,h},Z_{t-}^K)
-
\bigl(\mathcal L^{K,\pi_t^{*,\mathrm{obs}}}v^K\bigr)_{I_{t-}^K}
    (t,\mathbf S_{t-}^K,Z_{t-}^K).
\end{align*}
Recalling that $
\left|\sup_{u\in U}a_u-\sup_{u\in U}b_u\right|
\le \sup_{u\in U}|a_u-b_u|,$ together with
Assumption \ref{ass:ig-verification}(ii) implies
\begin{equation*}
0\le \mathcal R_t^{K,h}
\le 2L_K\left|R_{t-}^K-\widehat R_t^{K,h}\right|.
\label{eq:reconstruction-residual-bound}
\end{equation*}
In particular, Assumption \ref{ass:estimater} yields
\[
\mathbb E^{\pi^{*,\mathrm{obs}}}\Big[]
\int_0^T \mathcal R_t^{K,h}\,dt\Big]<\infty.
\]

For the sake of simplicity, we denote the effective running reward by
\[
r_t
:=
r\bigl(t,S_{t-}^K,I_{t-}^K,
       \pi_t^{*,\mathrm{obs}},\lambda_t^K\bigr).
\]
Applying the jump-diffusion It\^o formula to
$v_{I_t^K}^K(t,\mathbf S_t^K,Z_t^K)$, 
we obtain from the integro-partial HJB equation
\[
v_{I_t^K}^K(t,\mathbf S_t^K,Z_t^K)
+\int_0^t r_s\,ds
=
v_{I_0}^K(0,\mathbf S_0,Z_0^K)
-\int_0^t\mathcal R_s^{K,h}\,ds
+M_t,
\]
where $M$ is a local martingale starting at zero.

From Assumption \ref{ass:ig-verification}(i), Assumption \ref{ass:ma-dynamics}, and $Z_t^{K,\ell,j}\le N_t^{K,j}$ we get
\[
\mathbb E^{\pi^{*,\mathrm{obs}}}
[\sup_{t\le T}
|v_{I_t^K}^K(t,\mathbf S_t^K,Z_t^K)|] + \mathbb E^{\pi^{*,\mathrm{obs}}}[
\int_0^T |r_t|\,dt]<\infty.
\]

Using a localization approach, the martingale term has
zero expectation. Using the terminal condition and the verification
identity in Assumption \ref{ass:ig-verification}(i), we conclude that
\begin{equation*}
V^{K,\mathrm{full}}
-
J^{K,\mathrm{rel}}(\pi^{*,\mathrm{obs}})
=
\mathbb E^{\pi^{*,\mathrm{obs}}}
[\int_0^T\mathcal R_t^{K,h}\,dt].
\label{eq:current-feedback-performance-gap}
\end{equation*}

Finally, since
$\pi^{*,\mathrm{obs}}\in\mathcal A^{K,\mathrm{obs}}
\subseteq\mathcal A^{\mathrm{full}}$,
\begin{align*}
0
&\le V^{K,\mathrm{full}}-V^{K,\mathrm{obs}}
\\
&\le V^{K,\mathrm{full}}
     -J^{K,\mathrm{rel}}(\pi^{*,\mathrm{obs}})
\\
&\le 2L_K\int_0^T
\mathbb E^{\pi^{*,\mathrm{obs}}}
\left|R_{t-}^K-\widehat R_t^{K,h}\right|\,dt
\\
&\le 2L_K\int_0^T\sqrt{B_h^K(t)}\,dt,
\end{align*}
where the last inequality follows from Cauchy--Schwarz.
The right-hand side is finite because Assumption \ref{ass:estimater} gives
\[
\int_0^T\sqrt{B_h^K(t)}\,dt
\le
\left(
T\,\mathbb E^{\pi^{*,\mathrm{obs}}}
\int_0^T
\left|R_{t-}^K-\widehat R_t^{K,h}\right|^2\,dt
\right)^{1/2}
<\infty.
\]
\end{proof}

\begin{remark}
    The upper bound on the error made between the value function with full observation, including AI error observation and the value with estimated AI error depends on the variance accuracy of this estimator. Variance reductiton technics can be applied but are out of the scope of this study and we let this point for future research. 
\end{remark}

\subsection{Error decomposition for the learned policy}
\label{app:learned-policy-error}

Fix $K \geq 1$ and the initial law used in the preceding sections.
Let $\widehat{\pi}_K \in \mathcal{A}^{K,\mathrm{obs}}$ be an
admissible learned current-observation feedback, evaluated in
the $K$-term approximating model under the same continuous-time
decision convention. Define its remaining suboptimality by
\begin{equation*}
    \varepsilon_{\mathrm{learn}}^K
    :=
    V^{K,\mathrm{obs}}
    -
    J^{K,\mathrm{rel}}(\widehat{\pi}_K)
    \geq 0.
    \label{eq:learning-residual}
\end{equation*}
The discrepancy from the original full-information optimum
admits the exact decomposition
\begin{equation*}
\begin{aligned}
    V^{\mathrm{full}}
    - J^{K,\mathrm{rel}}(\widehat{\pi}_K)
    ={}&
    \bigl(V^{\mathrm{full}}-V^{K,\mathrm{full}}\bigr)
    +
    \bigl(V^{K,\mathrm{full}}-V^{K,\mathrm{obs}}\bigr)
    \\
    &\quad + \varepsilon_{\mathrm{learn}}^K.
\end{aligned}
\label{eq:total-error-decomposition}
\end{equation*}
The first difference need not be nonnegative, but its absolute
value is bounded by $\Delta_K$ from Theorem~\ref{thm:ma-full}.
Consequently, under the assumptions of Theorem~\ref{thm:ig-variance},
\begin{equation}
\begin{aligned}
    V^{\mathrm{full}}
    - J^{K,\mathrm{rel}}(\widehat{\pi}_K)
    \leq{}&
    \underbrace{\Delta_K}_{\text{kernel approximation}}
    +
    \underbrace{
        2L_K \int_0^T \sqrt{B_h^K(t)}\,dt
    }_{\text{current-observation restriction}} +
    \underbrace{
        \varepsilon_{\mathrm{learn}}^K
    }_{\text{neural approximation and training}}.
\end{aligned}
\label{eq:total-error-bound}
\end{equation}

This decomposition separates the errors addressed by the
theoretical analysis from those associated with the numerical
policy solver. Section~\ref{app:ma-full} controls the kernel-approximation
term, with $\Delta_K \to 0$, while Section~\ref{app:ma-observable} bounds the loss
from restricting control to the current observation features.
The latter bound need not vanish as $K$ increases.

The residual $\varepsilon_{\mathrm{learn}}^K$ includes both
the approximation error of the neural feedback class and
the suboptimality remaining after finite training.
The present analysis retains this residual explicitly and
does not establish a certified optimality bound for the
trained PPO policy.

In \eqref{eq:total-error-bound}, $\widehat{\pi}_K$ denotes
the continuously evaluated feedback associated with the
learned network. Its sampled deployment and numerical
simulation, as well as evaluation under the original kernel
when applicable, introduce additional discrepancies that
are separate from the three terms displayed above.

\subsection{Markovianization procedure and Exponential-mixture fit} The first step of the Hawkes-PPO algorithm requires to approach the non-Markovian kernel $\Phi$ with a mixture of exponential. In this context, since the value of the problem with non-Markovian kernel is approached with a mixture of Markovian exponential kernels, we refer to Algorithm 1 in \cite{bielecki2026continuoustimereinforcementlearningcontrolled} for this fitting preliminary procedure, mathematically justified by the results of this Appendix. We use the same procedure for updating the memory process $Z$ in Algorithm~\ref{alg:hawkes-ppo}.

\section{Numerical Implementation and Benchmarks}
\label{app:numerical-details}
This appendix specifies the common numerical environment, the construction of the
AI recommender, the observation-based RL methods, and the full-information
benchmarks. All reported outcomes use the same physical model of the AI black box. The full-information value functions below define the oracle
benchmarks; they do not replace the observation-based control problem in the main text.

\subsection{Choice of parameters and evaluation protocol}\label{app:choice}

\begin{table}[H]
\centering
\caption{Physical, AI and simulation parameters.
Kernel rows/columns follow $(S,A)$.
Learning settings are in the appendix.}
\label{tab:num-parameters}
\begingroup
\fontsize{8}{9.5}\selectfont
\setlength{\tabcolsep}{3pt}
\renewcommand{\arraystretch}{1.08}
\begin{tabular}{@{}ll@{\hspace{10pt}}ll@{}}
\toprule
Parameter & Numerical choice
& Parameter & Numerical choice\\
\midrule
$(T,\Delta)$ & $(8,0.05)$
& $(X_0,R_0,Z_0)$ & $(0,0,0)$\\
$(a,e,\nu)$ & $[0,1]^2\times[0,4]$
& $\varepsilon$ & $0.01$\\
$\delta$ & $0.18$
& $(\sigma_0,\sigma_1)$ & $(0.16,0.04)$\\
$f(x)$ & $0.1x+0.05\tanh(x-1)$
& $g(x)$ & $1.2x+0.6\tanh(x-1)$\\
$(k_0,k_a,k_e)$ & $(3.25,0.75,1.25)$
& $k_{AI}$ & $1.5$\\
$(\eta,\kappa,\chi)$ & $(50,0.30,0.20)$
& $\mu$ & $(0.35,0.25)$\\
$M^S$ & ${\rm Unif}[0.10,0.45]$
& $M^A$ & ${\rm Unif}[0.35,0.85]$\\
$(\gamma_0,\gamma_1)$ & $(0.1,1.1)$
& $s(L)$ & $0.0125+\frac{0.0775L}{0.75+L}$\\
$(\theta_0,\theta_1)$ & $(2,2)$ & $A$& $\begin{pmatrix}0.4&0.1\\0.1&0.4\end{pmatrix}$\\
$(\rho_{ij},\beta_{ij})$ & $(2,3/2)$ & Filter rates&
$\beta_k=k/4,\quad k=1,...,20$\\
\bottomrule
\end{tabular}
\endgroup
\end{table}

Controls $(a,e,\nu)\in[0,1]^2\times[0,4]$ are selected every $\Delta=0.05$;
each decision interval comprises five microsteps of $\delta t=0.01$.
Coefficients are frozen within each microstep. Linear state, error, and filter
decay use exponential integration, including event-age decay for jump effects.
We simulate risk events and regime switches by Poisson thinning, using shared candidate streams across policies to reduce noise in comparisons.
The thinning Poisson rate is 32 per channel and the switching rate is 4;
a capacity of 10 events per channel per microstep is checked, with a run
invalidated if its bound or capacity is exceeded.
Switches take effect at microstep endpoints; a fresh supervisory decision occurs
at the next decision epoch. The committed Human action and mitigation can therefore
become active after an AI-to-Human switch within the interval. The current
recommendation $\widehat a_t$ is recomputed at every microstep. All risk events remain observable in both regimes. The public RL observation is
\begin{equation}
 O_t=(t,X_t,I_t,Z_t,N_t^S,N_t^A,\mathbf1_{\{I_t=1\}}\widehat a^{\rm raw}).
 \label{app:observation}
\end{equation}
The full-information oracle additionally observes $R_t$ in both regimes and
knows all model coefficients and the nominal recommendation map.

All displayed objectives are undiscounted.
The realized quadratic variation is accumulated as
\[
 [X]_T=\int_0^T\{\sigma(\alpha_t)^2+\gamma(L_t)^2\}\,dt
       +\sum_{\tau_n^S\leq T}(M_n^S)^2.
\]

\subsection{Project-Only AI Recommender: black-box design}
\label{app:nominal-ai}
The nominal recommendation is computed before supervisory training and is
then frozen for all methods. The AI solves a project-only problem on
$(t,x)$, with control $b\in[0,1]$, dynamics
\[
 dY_s=(b_s-0.18Y_s)\,ds+\sigma(b_s)\,dW_s,\qquad
 \sigma(b)=0.16+0.04b,
\]
and value
\begin{equation*}
 v(t,x)=\sup_b\E_{t,x}\!\left[
 g(Y_T)+\int_t^T
 \left\{f(Y_s)-1.5-b_s^2-\frac{\eta}{2}\sigma(b_s)^2\right\}ds
 \right],\qquad \eta=50. \label{app:nominal-objective}
\end{equation*}
Here $T=8$, and $f,g$ are the reward same as in Table~\ref{tab:num-parameters}.
The quadratic variations term in this nominal objective is intrinsic project variation:
$[Y]_T-[Y]_t=\int_t^T\sigma(b_s)^2ds$.
The AI ignores Hawkes risk, added risk diffusion, its own
recommendation errors, mitigation, or switching in this optimization.

The project-only HJB is
\begin{equation}
 \begin{split}
 0={}&v_t+\sup_{0\leq b\leq1}
 \left\{(b-0.18x)v_x+\tfrac12\sigma(b)^2v_{xx}
       +f(x)-1.5-b^2-25\sigma(b)^2\right\},\\
 &v(T,x)=g(x). 
 \end{split} \label{app:nominal-hjb}
\end{equation}
We use a DGM network, introduced in \cite{dgm}, approximates $v$, using automatic
differentiation of the continuous HJB residual and a value parameterization
that imposes the terminal condition exactly. 

For given network derivatives $p=v_x$ and $q=v_{xx}$, the
control-dependent Hamiltonian is
\[
 H(b;p,q)=bp-b^2+\left(\frac q2-25\right)(0.16+0.04b)^2.
\]
Its bounded maximum is evaluated by comparing the endpoints $b=0,1$
and, when $2.08-0.0016q>0$, the candidate
\begin{equation*}
 b_{\mathrm{int}}=
 \rm{clip}\!\left(\frac{p+0.0064q-0.32}{2.08-0.0016q},0,1\right).
 \label{app:nominal-greedy}
\end{equation*}

Outside the exported state grid, we approximate the reward by its affine asymptote and use the resulting analytical optimal action, retaining the project quadratic-variation penalty. The extrapolation action is the clipped affine-tail optimum
\[
 \rm{clip}\!\left(
 \frac{0.1/0.18+(1.2-0.1/0.18)e^{-0.18(8-t)}-0.32}{2.08},0,1
 \right).
\]
For a nonconcave Hamiltonian, only the endpoints are required.
The maximizing action defines $a^*(t,x)$.

\begin{table}[!htbp]
\centering\small
\caption{Project-only recommender training and deployment.}
\label{app:nominal-settings}
\begin{tabular}{@{}p{0.30\linewidth}p{0.65\linewidth}@{}}
\toprule
Setting & Choice\\ \midrule
Network and optimizer & Width 64, one gated DGM block, Adam, gradient norm cap 10.\\
Training & 6,000 updates; batch 2,048; learning rates
$10^{-3}$ for updates 1--3,000, $3\cdot10^{-4}$ for 3,001--4,800,
and $10^{-4}$ for 4,801--6,000.\\
Collocation and selection & $t$ uniform on $[0,8]$;
$x$ sampled $65\%$ from $[-4,6]$ and $35\%$ from $[-16,16]$.
Residual validation on 8,192 points every 250 updates;
selected update 6,000.\\
Exported policy & Grid $[0,8]\times[-12,12]$, spacings
$\Delta t=\Delta x=0.01$, bilinear interpolation;
analytic affine-tail action outside the $x$-range.\\
Independent check & HJB residual RMSE $4.596\cdot10^{-4}$;
off-grid action interpolation RMSE $6.176\cdot10^{-7}$.
Initial recommendation $a^*(0,0)=0.30080232$.\\
\bottomrule
\end{tabular}
\end{table}

The deployed black box combines this frozen nominal controller with the
persistent error process:
\begin{equation*}
 \begin{split}
 \widehat a_{t} &= \operatorname{clip}_{[0,1]}(\widehat a_{t}^{\rm raw}), \quad\widehat a_{t}^{\rm raw}=
 a^*(t,X_{t})+R_t\\
 dR_t&=-2R_t\,dt+2s(L_t)\,dW_t^R
       -M^AdN^A_t,\\
 s(L)&=0.0125+\frac{0.0775L}{0.75+L},
 \qquad M^A\sim U[0.35,0.85],\qquad R_0=0.
 \end{split} \label{app:recommendation-error}
\end{equation*} 
RL observes $\widehat a_t^{\mathrm{raw}}$ only in AI mode,
while the project implements the clipped action $\widehat a_t$.
Neither $a^\star(t,X_t)$ nor $R_t$ is supplied separately.
The pure autonomous-AI benchmark fixes $I\equiv1$ and $\nu\equiv0$ and
uses this same corrupted recommendation, AI mitigation $0.01$, and the
full supervisory objective. Thus it is evaluated with all Hawkes risks,
the full project QV penalty, and AI cost $1.5+\alpha_t^2$, despite the
nominal recommender solving only \eqref{app:nominal-hjb}.

\subsection{Hawkes-PPO algorithm and foundations}
\label{app:hawkes-ppo}
Hawkes-PPO is PPO \cite{ppooriginal} applied to the public observation
specified in \eqref{app:observation}: time, project state, current regime, event
counts, the common Hawkes filter bank, and the current raw recommendation
only in AI mode. In Human mode the recommendation input is masked. Neither
the nominal controller $a^*$ nor the error state $R$ is supplied to the actor or critic.

There are two independent actor networks and two independent value
networks, indexed by the current regime. They share no trainable
parameters across regimes. A single trained policy is used for both initial
regimes; the network selected along a trajectory changes when the regime
changes. Each network has two width-128 hidden layers with SiLU activations.
For observation $o$ in regime $i$, the actor produces three means
$\mu_{\theta_i}(o)$ and three log standard deviations
$\ell_{\theta_i}(o)$, clipped to $[-5,1]$, and samples
\begin{equation}
 y\sim\mathcal N\!\left(\mu_{\theta_i}(o),
             \rm{diag}(e^{2\ell_{\theta_i}(o)})\right),\quad(a,e,\nu)=\left(\frac{1+\tanh y_1}{2},
                 \frac{1+\tanh y_2}{2},
                 2(1+\tanh y_3)\right).
\label{app:ppo-policy}
\end{equation}
Let $r_k$ be the complete interval reward: running benefit less operating
cost, $\kappa\nu^2/2$ with $\kappa=0.30$, realized switch fees, and
$\eta/2=25$ times the QV increment. Terminal benefit is included once,
on termination. Training uses $\widetilde r_k=r_k/50$.
For a rollout, old value predictions are kept fixed when forming the
generalized advantage estimates:
\begin{align*}
 \delta_k&=\widetilde r_k+(1-d_k)
       V_{\phi^{\mathrm{old}}_{I_{k+1}}}(o_{k+1})
       -V_{\phi^{\mathrm{old}}_{I_k}}(o_k), \\
 A_k&=\delta_k+0.95(1-d_k)A_{k+1},\qquad
 R_k=A_k+V_{\phi^{\mathrm{old}}_{I_k}}(o_k). 
\end{align*}
Here $d_k$ is the terminal indicator, the discount factor is one, and the
advantage recursion starts from zero beyond the rollout boundary.
Crucially, the bootstrap uses the next observation's regime value after a switch, and is suppressed on terminal transitions.
The advantages are centered and standardized over the entire rollout,
giving $\widehat A_k$.

Writing
$q_k(\theta)=\pi_\theta(u_k\mid o_k)/
\pi_{\theta^{\mathrm{old}}}(u_k\mid o_k)$,
the minimized actor and value losses are
\begin{align*}
 L_\pi=&-\E_{\mathrm{batch}}\!\left[
 \min\{q_k(\theta)\widehat A_k,
        \rm{clip}(q_k(\theta),0.8,1.2)\widehat A_k\}\right]
 -0.003\,\E_{\mathrm{batch}}[\mathcal H_G(o_k)], \\
 L_V=&\tfrac12\E_{\mathrm{batch}}\!
       \left[(V_{\phi_{I_k}}(o_k)-R_k)^2\right]. 
\end{align*}
The likelihood calculation includes the $\tanh$ Jacobian; the fixed affine
action-scale factor cancels in the ratio. The entropy term is specifically
that of the latent Gaussian,
$\mathcal H_G=\sum_{j=1}^3[\ell_j+\tfrac12\log(2\pi e)]$, not the
entropy of the bounded physical controls. It is a training regularizer
and is excluded from reported objective values.

\begin{table}[!htbp]
\centering\small
\caption{Hawkes-PPO training configuration.}
\label{tab:ppo-settings}
\begin{tabular}{@{}p{0.32\linewidth}p{0.63\linewidth}@{}}
\toprule
Setting & Choice\\ \midrule
Regime networks & Actor $46\!-\!128\!-\!128\!-\!6$;
value $46\!-\!128\!-\!128\!-\!1$, independently for each regime.\\
Initialization & Orthogonal weights, hidden gain $\sqrt2$;
actor output gain $0.01$, value output gain 1;
log-standard-deviation bias $-0.7$.
Switching-mean bias $\operatorname{atanh}(-0.95)$;
initial $a,e\approx0.5$, $\nu\approx0.1$.\\
Parallel sampling & 1,024 environments, exactly 512 AI starts and
512 Human starts. Each reset restores the assigned initial regime.\\
Rollout and update & 32 decisions per environment; four shuffled epochs;
minibatch 4,096; Adam; actor/value gradient norm caps 1 and 5.\\
Budget & 480 rollouts, 15,728,640 transitions,
15,360 actor and 15,360 value updates. No replay buffer or target-network
soft updates.\\
Initial learning rates & Actor $3\cdot10^{-4}$, value $5\cdot10^{-4}$;
common schedule below.\\
Selection & Every 786,432 transitions, 1,024 development paths per
initial regime; maximize the equally weighted mean of both start values.
Selected checkpoint at 15,728,640 transitions.\\
Seeds & Fit index 1; initialization 760001000;
training environment 760000001; development 761000001.\\
\bottomrule
\end{tabular}
\end{table}

Both learning rates are multiplied by the same schedule $c(n)$, where
$n$ counts collected environment transitions:
\begin{equation*}
 c(n)=
 \begin{cases}
 1, & n\leq n_0,\\
 0.1+0.45\!\left[1+\cos\!\left(\pi
            \dfrac{n-n_0}{n_1-n_0}\right)\right],&n_0<n<n_1,\\
 0.1,&n\geq n_1,
 \end{cases}
 \quad n_0=3{,}932{,}160,\quad n_1=11{,}796{,}480.
 \label{app:ppo-lr}
\end{equation*}
The small switching initialization changes only the two
switching-mean output biases; it does not remove Gaussian exploration.
It therefore initializes the deterministic action near $\nu=0.1$,
rather than forcing all sampled switching intensities to equal $0.1$.

At development and final evaluation, the Gaussian mean replaces the sampled
latent action in \eqref{app:ppo-policy}. Thus controls are deterministic
functions of current observations, but switches remain stochastic at the
selected intensity. The same frozen checkpoint is evaluated from both
starts. Training uses fresh on-policy rollouts, without oracle labels,
saved teacher paths, arbitrary-state reset data, or additional improvement
rounds outside the stated budget.

\begin{algorithm}[!htbp]
\caption{Hawkes-PPO with balanced starts and regime-specific networks}
\label{alg:hawkes-ppo}
\small
\algrenewcommand{\algorithmicrequire}{\textbf{Input:}}
\algrenewcommand{\algorithmicensure}{\textbf{Output:}}
\begin{algorithmic}[1]
\Require Simulator and public observation map; $M=1024$ environments, $L=32$ steps,
$K=480$ rollouts, $K_{\rm ep}=4$ epochs, minibatch size $B=4096$.
\Require Policy/value architectures, loss coefficients, and learning-rate schedule in Table~\ref{tab:ppo-settings}.
\Ensure Selected parameters $\theta^\star=(\theta_0^\star,\theta_1^\star)$ for deterministic deployment.
\State Initialize $(\theta_0,\theta_1)$ and $(\phi_0,\phi_1)$ independently; set switching mean biases to $\operatorname{atanh}(-0.95)$.
\State Assign $M/2$ environments to each initial regime; reset $X=R=Z=N^S=N^A=0$.
\State $J^\star\gets-\infty$; $\theta^\star\gets\theta$.
\For{$n=1,\ldots,K$}
  \State $(\theta_{\rm old},\phi_{\rm old})\gets(\theta,\phi)$; initialize an empty rollout buffer $\mathcal D$.
  \For{$k=0,\ldots,L-1$ \textbf{in all $M$ environments in parallel}}
    \State Observe $o_k=(t_k,X_k,I_k,Z_k,N_k^S,N_k^A,\mathbf1_{\{I_k=1\}}\widehat a^{\rm raw})$.
    \State Sample $z_k$ from the Gaussian actor $\theta_{{\rm old},I_k}$; set $u_k=\mathcal T(z_k)$.
    \State Hold $u_k$ for one decision interval; advance five simulator microsteps.
    \State Observe $(r_k,o_{k+1},d_k)$; set $\widetilde r_k=r_k/50$.
    \State Store transition, $z_k$, old log probability and old value predictions in $\mathcal D$.
    \State Reset terminated environments to their assigned initial regimes.
  \EndFor
  \State Compute $\delta_k$, $A_k$ and $R_k$ by terminal-masked GAE with $\gamma=1$, $\lambda_{\rm GAE}=0.95$.
  \State Standardize $A_k$ over $\mathcal D$ to obtain $\widehat A_k$; freeze $R_k$ and old predictions.
  \State Set $\eta_\pi,\eta_V$ using the transition-count schedule at $nML$.
  \For{$j=1,\ldots,K_{\rm ep}$}
    \For{each shuffled minibatch $\mathcal B\subset\mathcal D$ of size $B$}
      \State Route each sample through its current-regime actor and value network.
      \State $\theta\gets\operatorname{AdamStep}(\theta,\nabla_\theta L_{\rm actor}(\mathcal B),\eta_\pi;\,\text{norm cap }1)$.
      \State $\phi\gets\operatorname{AdamStep}(\phi,\nabla_\phi L_{\rm value}(\mathcal B),\eta_V;\,\text{norm cap }5)$.
    \EndFor
  \EndFor
  \If{$nML$ is a multiple of $786432$}
    \State Evaluate $u=\mathcal T(\mu_{\theta_I}(o))$ on 1024 development paths per initial regime.
    \State $\widehat J\gets\tfrac12(\widehat J_{\rm AI}+\widehat J_{\rm Human})$.
    \If{$\widehat J>J^\star$}
      \State $(J^\star,\theta^\star)\gets(\widehat J,\operatorname{copy}(\theta))$.
    \EndIf
  \EndIf
\EndFor
\State \Return $\theta^\star$; report separate-start outcomes on 4096 fresh common-noise paths per start.
\end{algorithmic}
\end{algorithm}

\newpage
\subsection{Full-Information Switching Oracle}
\label{app:oracle}
The oracle knows the model coefficients, the frozen map $a^*(t,X_t)$ and the
current recommendation error in either regime. Write its value as
$V_i(t,x,z,\zeta)$, where $\zeta$ represents the process $R$. The additional coordinate describes
oracle information and is not an extra observation supplied to RL. For $j\in\{S,A\}$, let $\mathbf e^{\,j}$ denote the filter
increment with components
$(\mathbf e^{\,j})^{\ell,k}=\mathbf 1_{\{k=j\}}$.
With $\alpha_0=a$, $\alpha_1={\rm clip}(a^*(t,x)+\zeta,0,1)$, the controlled generator, excluding regime switching, of a test function $F(t,x,z,\zeta)$ is
\begin{align*}
\mathcal L_i^{a,e}F(t,x,z,\zeta)
={}&(\alpha_i-0.18x)F_x-2\zeta F_\zeta
-\sum_{\ell,j}\beta_\ell z^{\ell,j}F_{z^{\ell,j}}
\nonumber\\
&+\frac12\bigl[\sigma(\alpha_i)^2+\gamma(L_i)^2\bigr]F_{xx}
+2s(L_i)^2F_{\zeta\zeta}
\nonumber\\
&+\lambda_i^S\,\mathbb E_{M^S}\!\left[
F(t,x-M^S,z+\mathbf e^{\,S},\zeta)
-F(t,x,z,\zeta)
\right]
\nonumber\\
&+\lambda_i^A\,\mathbb E_{M^A}\!\left[
F(t,x,z+\mathbf e^{\,A},\zeta-M^A)
-F(t,x,z,\zeta)
\right].
\end{align*}

Let $\overline m_{S,2}=\E[(M^S)^2]=0.0858333333333$ and
$q_i=\sigma(\alpha_i)^2+\gamma(L_i)^2+\lambda_i^S\overline m_{S,2}$.
The coupled HJB equations are
\begin{align*}
 0={}&\partial_t V_i+
 \sup_{a,e\in[0,1],\,\nu\in[0,4]}
 \Big\{\mathcal L_i^{a,e}V_i+f(x)-c_i-25q_i
       +\nu(V_{1-i}-V_i-0.20)-0.15\nu^2\Big\},
 \label{app:oracle-hjb}\\
 & V_i(8,x,z,\zeta)=g(x),\qquad i\in\{0,1\}.\nonumber
\end{align*}
The $\lambda_i^S\overline m_{S,2}$ term is the compensator of the jump-QV
penalty; jump losses in $V_i$ are separately represented by the generator.
Switching changes the regime at the same $(x,z,\zeta)$, with no recommendation
redraw or error reset. Optimization over $a,e$ is vacuous in AI mode in the
continuous problem. The switching maximizer is
\begin{equation*}
 \nu_i^*={\rm clip}\!\left(\frac{V_{1-i}-V_i-0.20}{0.30},0,4\right).
 \label{app:oracle-nu}
\end{equation*}
For Human project control, maximize
$aV_{0,x}+(\tfrac12V_{0,xx}-25)(0.16+0.04a)^2-0.75a^2$ on $[0,1]$.
The implementation compares endpoints and, when the quadratic is concave,
the clipped stationary candidate
\[
 a_{\rm stat}=\frac{V_{0,x}+0.0064V_{0,xx}-0.32}
                    {1.58-0.0016V_{0,xx}}.
\]
Mitigation $e$ is maximized over the grid $0,0.1,\ldots,1$, followed by local
candidate refinements of size $0.05$, $0.025$, and $0.0125$.
This finite search approximates the continuous mitigation maximum.

\paragraph{DGM implementation.}
Two independently parameterized gated DGM networks \cite{dgm}
represent the two regimes, each with width 64 and one gated block.
Exactly zero-weight filters are omitted, leaving ten active filter coordinates;
the input is $(t,x,\zeta,z_{\rm active})\in\mathbb R^{13}$.
The ansatz
\[
 V_i^\omega(t,x,z,\zeta)=g(x)+(8-t)\{-8+F_i^\omega(t,x,z,\zeta)\}
\]
enforces the terminal condition exactly. First and second derivatives are
obtained by automatic differentiation. Uniform-mark expectations use three-point
Gauss--Legendre quadrature in training and five points in residual audits.
The loss is the collocation average of
$\frac12\sum_{i=0}^1(\mathcal R_i/10)^2$, where $\mathcal R_i$ is the
HJB residual with greedily selected controls. No RL labels or temporal Bellman
targets are used for DGM training.

Training uses 6000 Adam updates with fresh batches of 512 collocation states,
gradient-norm cap 10, and learning rate
$10^{-3}[0.1+0.9\{1+\cos(\pi n/6000)\}/2]$.
A time curriculum samples $t$ uniformly from $[8-H_n,8]$, with $12\%$ of
samples placed at the left endpoint and
$H_n=\min\{8,0.5+7.5n/2100\}$.
Synthetic event histories generate physically interpretable filter states:
conditional on rates sampled uniformly in $[0.15,1.25]$, per-channel counts
are Poisson with mean rate times $t$ and are capped at 32 for collocation.
Ages are uniform in $[0,t]$, with $15\%$ of histories shifted toward recent
events by the map $r\mapsto r^2/\max(t,0.01)$; $15\%$ of filter samples are set to zero.
The project sample is
$0.6(1-e^{-0.18t})+0.7\sqrt{\max(t,0.02)}\xi$, $\xi\sim\mathcal N(0,1)$,
with $20\%$ replaced by $\mathcal U[-5,6]$.
Error samples are $-0.6\sum_n e^{-2r_n^A}+0.07\xi_E$, with $15\%$ replaced
by $\mathcal U[-2,1.3]$. These are collocation distributions, not restrictions
on physical state support or data supplied to RL.

Every 1000 updates, a candidate policy is evaluated on 512 development paths
per initial regime, with equal-weight mean return used for selection.
Initialization seed is 730000001 and development seed is 731000000.
The selected checkpoint is update 3000, although all 6000 updates were completed.
Its held-out collocation residual RMSEs are $0.1530$ in Human and $0.5321$ in AI;
these describe the selected weights rather than a different residual-selected
checkpoint. They do not provide an optimality certificate.

The oracle controls are deployed in the common fixed-grid simulator from
Section~\ref{app:choice}. Consequently, its continuous HJB value and its
simulated return need not coincide exactly: controls are committed for a decision
interval, and coefficients are frozen on microsteps. Reported objectives are
Monte Carlo policy returns, not network predictions. We use ``oracle'' to mean
a known-model, full-information numerical benchmark, not a certified exact optimum.

\subsection{Optimal Fixed-Human Benchmark: pure human mode design}
\label{app:human}
This benchmark solves the control problem with $I_t\equiv0$ and $\nu_t\equiv0$,
while optimizing both $a_t$ and $e_t$. It tests the benefit of switching against
an optimized Human policy, rather than against constant Human actions.
The numerical solution is an approximation to this optimal fixed-regime problem.

The recommendation error does not affect the Human project's coefficients or
intensities, and there is no future AI delegation. Thus the sufficient state is
$(t,x,z)$, with value $W(t,x,z)$.
Action-corruption events still matter through their excitation of future events.Using the filter increments $\mathbf e^{\,S}$ and $\mathbf e^{\,A}$
defined above, the fixed-Human generator is
\begin{align*}
\mathcal L_H^{a,e}W(t,x,z)
={}&(a-0.18x)W_x
-\sum_{\ell,j}\beta_\ell z^{\ell,j}W_{z^{\ell,j}}
\nonumber\\
&+\frac12\bigl[\sigma(a)^2+\gamma(L_0)^2\bigr]W_{xx}
\nonumber\\
&+\lambda_0^S\,\mathbb E_{M^S}\!\left[
W(t,x-M^S,z+\mathbf e^{\,S})-W(t,x,z)
\right]
\nonumber\\
&+\lambda_0^A\left[
W(t,x,z+\mathbf e^{\,A})-W(t,x,z)
\right].
\end{align*}
Here $\lambda_0^j=\lambda_0^j(z,e)$,
$L_0=\lambda_0^S+\lambda_0^A$, and all derivatives
are evaluated at $(t,x,z)$.
The fixed-Human HJB is
\begin{align*}
 0={}&W_t+\sup_{a,e\in[0,1]}
 \Big\{\mathcal L_H^{a,e}W+f(x)-3.25-0.75a^2-1.25e^2\nonumber\\
 &\hspace{32mm}-25[\sigma(a)^2+\gamma(L_0)^2
                  +\lambda_0^S\overline m_{S,2}]\Big\},
 \qquad W(8,x,z)=g(x).
\end{align*}
The action-event jump term remains even though no recommendation is implemented.

A single width-64, one-block gated DGM network uses the 12 inputs
$(t,x,z_{\rm active})$ and the hard-terminal ansatz
$W^\omega=g(x)+(8-t)(-12+F_H^\omega)$.
Action maximization, mitigation search, mark quadrature, Adam learning-rate
schedule, gradient cap and time/filter collocation scheme are the same as for
the switching oracle. The project collocation distribution is
$2.5(1-e^{-0.18t})+0.6\sqrt{\max(t,0.02)}\xi$, with $20\%$ replaced by
$\mathcal U[-5,6]$; no error-coordinate samples are needed.
The loss is $\E_{\rm colloc}[(\mathcal R_H/10)^2]$.
Training completes 6000 updates with batch size 512 and initialization seed
730000017. Every 1000 updates, selection uses 512 Human-start development
paths with seed 731000000. The selected update is 2000, with held-out residual
RMSE $0.02234$ and on-development-path residual RMSE $0.03286$.
The final independent Human-start objective is
$-96.027$ with $90\%$ interval $[-96.191,-95.864]$.
As with the switching oracle, this is the return of an approximate HJB-derived
policy deployed in the common simulator, rather than a certified value bound.

\subsection{Additional RL Baselines}
\label{app:other-rl}

We train PPO, SAC \cite{haarnoja2018sac}, DDPG \cite{lillicrap2015continuous} and CT-DDPG \cite{cheng2026deterministicpolicygradientreinforcement, bielecki2026continuoustimereinforcementlearningcontrolled} both with and without the same 40 Hawkes
filters. The prefix ``Hawkes-'' denotes the filtered version. All receive
$(t,X,I,N^S,N^A,\mathbf{1}_{\{I=1\}}\widehat a^{\rm raw})$; only the filtered versions receive
$Z$. The unfiltered implementations set the 40 filter coordinates to zero,
retaining the same network dimensions. Neither version receives $R$, the
nominal recommendation $a^*$, the current recommendation in Human mode,
or oracle labels. Every method uses separate networks for the current regime, rather than separate policies for the initial regime. All MLPs have
two hidden layers of width 128 with SiLU activations. Actors have 46 inputs;
SAC/DDPG action-value critics and CT-DDPG rate networks have 49 inputs.
SAC has two action-value critics per regime, DDPG one, and CT-DDPG one value
and one rate network per regime. Native PPO/SAC/DDPG heads are independently
initialized; the two CT-DDPG regime heads start as identical copies with
subsequently separate parameters. CT-DDPG initially sets input weights on
filter coordinates to zero, and learns these weights during training.

All fits use 1,024 parallel environments, half assigned to each standard
initial regime, with $\Delta=0.05$, simulation step $0.01$, undiscounted
returns and reward divisor 50. Initial deterministic switching centers are
approximately $0.1$ in both regimes. Initial Human action/effort centers are
$(0.5,0.5)$ for PPO/SAC/DDPG and $(0.35,0.45)$ for CT-DDPG. Exploration is
still active: a small deterministic center does not mean that executed
switching rates remain close to $0.1$. PPO without filters otherwise uses
exactly the Hawkes-PPO procedure, including its pre-tanh Gaussian entropy
coefficient $0.003$ and 480 rollouts of 32 decisions, each with four update
epochs and minibatch size 4,096.

\paragraph{SAC and DDPG:} Let $v\in[-1,1]^3$ denote normalized actions, mapped to physical controls by
$u=((v_1+1)/2,(v_2+1)/2,2(v_3+1))$. Write $\widetilde r=r/50$ and let $d$
indicate episode termination. SAC samples a tanh-squashed diagonal Gaussian,
uses twin critics, and minimizes
\begin{align*}
 y_{\rm SAC}&=\widetilde r+(1-d)\left[\min_{j=1,2}\bar Q_j(o',v')
                    -\alpha\log\pi_\theta(v'\mid o')\right],
 v'\sim\pi_\theta(\cdot\mid o'),\\
 L_Q&=\sum_{j=1}^2\mathbb E[(Q_j(o,v)-y_{\rm SAC})^2],
 L_\pi=\mathbb E[\alpha\log\pi_\theta(v\mid o)-\min_jQ_j(o,v)].
\end{align*}
Here the entropy is the differential entropy of the squashed
normalized action distribution, including the tanh Jacobian, rather than
PPO's pre-tanh Gaussian entropy. Log standard deviations are bounded by
$[-5,1]$ and initialized at $-0.7$. The temperature starts at $\alpha=0.01$
and is learned with target entropy $-3$ via
\[
 L_\alpha=-\mathbb E\!\left[\log\alpha\,
          \bigl(\operatorname{stopgrad}\log\pi_\theta(v\mid o)-3\bigr)\right].
\]
Its Adam learning rate is $3\times10^{-4}$, without decay, and
$\alpha\in[10^{-5},0.2]$ is enforced after each update. The selected
unfiltered and filtered SAC checkpoints have temperatures
$1.1844\times10^{-4}$ and $2.0036\times10^{-4}$, respectively.
Entropy affects training only; all reported objectives use the original
control criterion, and evaluation uses the transformed Gaussian mean.

DDPG instead uses one deterministic actor and one action-value critic per
regime, with targets
$y_{\rm DDPG}=\widetilde r+(1-d)\bar Q(o',\bar\mu(o'))$,
critic loss $\mathbb E[(Q-y_{\rm DDPG})^2]$, and actor loss
$-\mathbb E[Q(o,\mu(o))]$. Both off-policy methods store 262,144 transitions,
collect 32 initial vector steps of uniform actions, and then perform two
minibatch updates per vector step, with batch size 4,096. DDPG adds clipped
Gaussian exploration in normalized coordinates, with standard deviation
decreasing linearly from $0.25$ to $0.05$ over the first 80\% of training.
SAC explores by sampling its actor. Polyak updates use
$\bar\theta\leftarrow(1-\tau)\bar\theta+\tau\theta$ after each update:
SAC updates its two target critics, while DDPG updates its target critic
and actor.

\paragraph{Implemented CT-DDPG extension.}
Following the value/advantage-rate approach of CT-DDPG \cite{cheng2026deterministicpolicygradientreinforcement,bielecki2026continuoustimereinforcementlearningcontrolled}, this baseline uses an integrated martingale residual and an advantage-rate
network, rather than a discrete-time action-value critic. Let $q_\psi(o,u)$
be its raw rate network and $\mu_\theta(o)$ its actor. The centered rate is
\[
 A_{\psi,\theta}(o,u)=q_\psi(o,u)-q_\psi(o,\mu_\theta(o)).
\]
For a replay segment of $\ell\in\{4,\ldots,20\}$ decisions, sampled uniformly,
the joint value/rate update minimizes the squared residual
\begin{equation}
 \mathbb E\!\left[
 \left\{\bar V(o_{k+\ell})-V(o_k)+
 \sum_{j=k}^{k+\ell-1}
 \left(\frac{r_j^{\rm run}}{50}-\Delta A_{\psi,\theta}(o_j,u_j)\right)
 \right\}^{\!2}\right].\label{app:ct-residual}
\end{equation}
The reference actor is held fixed during this critic update. The reward
$r_j^{\rm run}$ includes running, QV and switching contributions, with the
terminal payoff removed once. Values enforce the known terminal condition
through $V(o)=g(x)/50+(T-t)v_\phi(o)$. Actor updates maximize the
uncentered rate, using loss
$-\Delta\mathbb E[q_\psi(o,\mu_\theta(o))]$. Only the value network has a
Polyak target; there is no target actor or target rate network.

\paragraph{Returns under the common training budget.}
Table~\ref{app:all-values} reports the selected low-initial-intensity fits
using the common evaluation protocol. Under the common environment-interaction budget, Hawkes-PPO achieves
the highest evaluated objective among the RL methods from both initial
regimes, and it approaches the oracle well. The weaker performance of the alternative methods may partly
reflect optimization difficulties under the available training budget.
In particular, CT-DDPG selects relatively early checkpoints and
subsequently deteriorates, suggesting instability in the present
implementation. However, these results do not establish nonconvergence
of the alternative algorithms or an inherent superiority of PPO.
The comparison concerns the selected fitted policies; the reported
confidence intervals quantify evaluation uncertainty, rather than
variability across training seeds.
\begin{table}[!htbp]
\centering\small
\caption{Objective values and $90\%$ Monte Carlo confidence intervals.
Each reported entry uses 4096 paths; a dash indicates an incompatible
initial regime for a fixed-regime benchmark. Higher is better.}
\label{app:all-values}
\setlength{\tabcolsep}{5pt}
\begin{tabular}{@{}lcc@{}}
\toprule
Method & AI start & Human start\\\midrule
Oracle (HJB/DGM) & -93.262 [-93.508, -93.015] & -94.230 [-94.468, -93.991]\\
Hawkes-PPO & -93.608 [-93.861, -93.354] & -94.676 [-94.915, -94.437]\\
Pure AI & -98.127 [-98.537, -97.718] & ---\\
Pure Human (HJB/DGM) & --- & -96.027 [-96.191, -95.864]\\
\textbf{Other RL methods:}\\
PPO & -94.954 [-95.198, -94.710] & -97.219 [-97.407, -97.031]\\
Hawkes-SAC & -95.750 [-96.033, -95.467] & -96.579 [-96.853, -96.305]\\
SAC & -94.759 [-95.029, -94.489] & -96.079 [-96.347, -95.810]\\
Hawkes-DDPG & -95.810 [-96.128, -95.492] & -96.816 [-97.128, -96.504]\\
DDPG & -95.591 [-95.876, -95.306] & -96.783 [-97.065, -96.502]\\
Hawkes-CT-DDPG & -98.776 [-99.155, -98.397] & -101.670 [-101.902, -101.438]\\
CT-DDPG & -98.984 [-99.357, -98.611] & -101.811 [-102.106, -101.516]\\
\bottomrule
\end{tabular}
\end{table}

\subsection{Comparative statics, supervision mechanisms,
and numerical sensitivity}
\label{app:supervision-statics}

This subsection provides additional supporting evidence for
Section~\ref{sec:supervision-statics}. The results are reported in Table \ref{tab:comparative-intervals}. 

\paragraph{Configuration design.}
Let $A^0$ denote the baseline excitation matrix, $k_0$ the baseline
Human fixed operating-cost coefficient, and $\chi$ the baseline
per-switch fee. We vary one factor at a time:
\[
\begin{array}{lll}
\text{Excitation strength:}
    & A=s_A A^0,
    & s_A\in\{0.5,1,1.5\},\\
\text{Human fixed cost:}
    & k_0\mapsto s_c k_0,
    & s_c\in\{0.75,1,1.25\},\\
\text{Per-switch fee:}
    & \chi\mapsto s_K \chi,
    & s_K\in\{0.5,1,2\}.
\end{array}
\]
The Human action and effort costs remain unchanged when $k_0$
varies. The quadratic switching-rate penalty remains unchanged
when the per-switch fee varies. The baseline fee is $\chi=0.2$,
so the three per-switch charges are $0.1$, $0.2$, and $0.4$;
the intensity-penalty coefficient stays at $0.15$. The baseline is shared across
the three comparisons, giving seven distinct one-factor
configurations. Crossing $s_A\in\{0.5,1.5\},
s_c\in\{0.75,1.25\}$
adds four configurations, for eleven in total.
All unlisted physical and cost parameters retain their
baseline values.

The intensity specification is
\[
\lambda_t^j=\mu_j+(1-\bar e_t)H_t^j,
\qquad
\mu=(0.35,0.25),
\]
where $\bar e_t$ is the effective mitigation effort defined in
the main model. In particular, effort does not suppress
the baseline intensity.\\

Actor and value learning rates start at
$3\times10^{-4}$ and $5\times10^{-4}$, remain constant through
$3,932,160$ transitions, and decay according to the original schedule
to one tenth by $11,796,480$ transitions. Independent fits run in
parallel without changing these per-fit settings or budgets.

\paragraph{Evaluation,  uncertainty and Monte Carlo.}
Each selected policy is evaluated on $4,096$ paths per initial
regime in one batch, using seed $783000000$ and the original numerical
grid. Common random streams are used across configurations, and
AI and Human starts are reported separately. The reused baseline
reproduces all twenty-three original recorded per-path measurements
exactly for both initial regimes.

For ordinary path means, pointwise $90\%$ Monte Carlo intervals
use
\[
\bar Y \pm z_{0.95}\frac{s_Y}{\sqrt n},
\qquad z_{0.95}\simeq1.64485.
\]
Marginal burst probabilities use Wilson intervals.
Endpoint contrasts use paired path differences, with paired
influence functions for conditional-effort ratios.
These intervals condition on the selected fitted policies.
They exclude variability across training seeds,
checkpoint-selection uncertainty, and discretization error,
and are not adjusted for multiple comparisons.

\paragraph{Reported measurements.}
For path $p$, define Human duration and Human effort exposure as
\[
T_{H,p}=\int_0^T\mathbf1_{\{I_t=0\}}\,dt,
\qquad
E_{H,p}=\int_0^T\mathbf1_{\{I_t=0\}}e_t\,dt.
\]
Human time and conditional effort are estimated by
\[
\widehat h
=
\frac{1}{nT}\sum_{p=1}^n T_{H,p},
\qquad
\widehat e_H
=
\frac{\sum_{p=1}^n E_{H,p}}
     {\sum_{p=1}^n T_{H,p}}.
\]
The table reports $100\widehat h$.
Total Human effort exposure is
$\sum_p E_{H,p}/(nT)=\widehat h\,\widehat e_H$,
which differs from effort conditional on Human operation.
The latter is a pooled ratio, not the average of
path-specific conditional ratios.
Its standard error uses the path influences
\[
\psi_p
=
\frac{E_{H,p}-\widehat e_H T_{H,p}}
     {n^{-1}\sum_{q=1}^nT_{H,q}},
\qquad
\widehat{\mathrm{se}}(\widehat e_H)
=
\frac{s_\psi}{\sqrt n}.
\]

Incidents are $N_T^S+N_T^A$; realized switches count accepted
regime changes. A burst occurs when at least three combined
incidents fall within any closed sliding window of one time
unit. Terminal quality is $X_T$, not $g(X_T)$.
Path volatility is $\sqrt{[X]_T/T}$ averaged across paths,
rather than the square root of mean quadratic variation.

\paragraph{Physical outcomes and the objective.}
At baseline costs, raising excitation from $0.5$ to $1.5$
reduces mean incidents from $5.595$ to $5.127$ for AI starts
and from $5.581$ to $4.918$ for Human starts, while increasing
terminal quality from $1.168$ to $1.881$ and from $1.189$
to $2.023$, respectively. These outcomes combine a changed
incident environment with the response of a separately fitted policy.

Raising Human fixed cost from $0.75$ to $1.25$ at baseline
excitation increases incidents from $5.054$ to $6.202$
for AI starts and from $4.926$ to $6.130$ for Human starts.
Burst probabilities increase from $36.7\%$ to $57.4\%$
and from $33.5\%$ to $56.5\%$, respectively.
Conditional effort rises, but reduced Human duration causes
total Human effort exposure to fall.
Increasing the per-switch fee multiplier from $0.5$ to $2$
reduces mean switches from $1.344$ to $0.907$ for AI starts and
from $1.924$ to $0.659$ for Human starts. Incident counts and
burst probabilities increase for AI starts and decrease for
Human starts, while terminal quality increases for both.

\paragraph{Reward and cost decomposition.}
We separate running and terminal project benefits, Human and AI
operating costs, switching-rate and realized-switch costs,
and the continuous and jump components of the
quadratic-variation penalty.
Their signed sum reconstructs the reported objective.
This decomposition distinguishes improvements obtained through
cost savings from improvements in physical project outcomes.
In particular, an improved objective need not coincide with
fewer incidents or greater terminal quality.

\paragraph{Supervision around incidents and takeovers.}
We align baseline trajectories around incidents, candidate
cluster onsets, and actual AI-to-Human takeovers.
An incident may belong to either channel.
A candidate cluster onset is an incident followed by at least
two further incidents within one time unit; this is a
retrospective event definition, not a unique partition into
maximal clusters.

We retain complete windows on $[-1,1]$ within the recorded
trajectory. Within each path, the earliest eligible anchor is
retained and subsequent anchors must be at least two time
units apart.
Recorded left-constant microstep values are sampled on a
$0.05$-unit lag grid.
Retained windows are first averaged within each path;
eligible paths then receive equal weight.
Conditional effort and the AI-only takeover rate use ratios
of the corresponding path-averaged numerators and exposure
denominators, with path-level delta-method uncertainty.

\paragraph{Risk during periods selected for supervision.}
At baseline excitation, higher Human fixed cost is associated
with less Human time and higher conditional effort.
During Human operation, total unmitigated excitation can be
reconstructed as
\[
H_t^S+H_t^A
=
\frac{\lambda_t^S+\lambda_t^A-0.6}{1-e_t}.
\]
This diagnostic excludes microsteps with $e_t\geq0.999$.
At the high-cost setting, it excludes $23.02\%$ and $16.86\%$
of Human microsteps for AI and Human starts, compared with
$0.18\%$ and $0.12\%$ at baseline cost.
Among included Human microsteps, the mean excitation across
cost multipliers $0.75,1,1.25$ is $0.230,0.286,0.306$ for AI
starts and $0.188,0.243,0.235$ for Human starts.
The latter pattern is nonmonotonic, and both comparisons are
sensitive to the effort-dependent exclusions.

\paragraph{Joint changes in excitation and Human cost.}
The four crossed configurations test whether the one-factor
patterns persist when risk and oversight cost change together.
At both crossed cost levels, increasing excitation from $0.5$
to $1.5$ increases Human time and terminal quality and reduces
incident counts and burst probability for both initial regimes.
At both crossed excitation levels, increasing Human cost from
$0.75$ to $1.25$ reduces Human time and increases incidents and
burst probability. These Human-time and incident directions
therefore persist in the crossed checks.

Effort, switching, and terminal quality have exceptions.
Higher Human cost reduces conditional effort at low excitation
but increases it at high excitation, for both starts.
The AI-start effort response to stronger excitation is unresolved
at low Human cost, with difference $-0.008$ and paired $90\%$
interval $[-0.020,0.003]$, but becomes positive at high Human
cost, unlike the negative response at baseline cost.
At high excitation, higher Human cost increases terminal quality
by $0.133$ for AI starts and $0.147$ for Human starts, despite
higher incident burden; at low excitation, quality decreases.
Thus, a universal quality loss from more expensive supervision
is not supported.

Endpoint comparisons also do not establish monotonicity.
With expensive supervision, AI-start incident counts across
excitation multipliers $0.5,1,1.5$ are $5.606,6.202,5.261$.
At high excitation, Human-start quality across cost multipliers
$0.75,1,1.25$ is $1.833,2.023,1.980$.
Switching responses likewise depend on the starting regime
and cost-risk combination.

\begingroup
\newcommand{\csCI}[3]{\shortstack[c]{#1\\{\fontsize{7}{8}\selectfont [#2,#3]}}}
\begin{table}[p]
\centering
\caption{Supervision and project outcomes under the original numerical protocol. Means appear above conditional 90\% evaluation-path Monte Carlo intervals.}
\label{tab:comparative-intervals}
\fontsize{8}{9}\selectfont
\setlength{\tabcolsep}{1.5pt}
\renewcommand{\arraystretch}{1.02}
\resizebox{\linewidth}{!}{%
\begin{tabular}{@{}lccccccc@{}}
\toprule
Configuration & \shortstack{Human\\time (\%)} & \shortstack{Conditional\\effort} & \shortstack{Switches\\/ path} & \shortstack{Incidents\\/ path} & \shortstack{Burst\\prob. (\%)} & \shortstack{Terminal\\quality} & Objective \\
\midrule
\multicolumn{8}{@{}l}{\textbf{Panel A: AI initial regime}} \\
\addlinespace[2pt]
\textbf{Baseline} & \csCI{46.436}{45.694}{47.179} & \csCI{0.896}{0.893}{0.898} & \csCI{1.457}{1.430}{1.484} & \csCI{5.396}{5.335}{5.458} & \csCI{45.874}{44.597}{47.157} & \csCI{1.811}{1.787}{1.835} & \csCI{-93.608}{-93.861}{-93.354} \\
\addlinespace[2pt]
Excitation 0.5 & \csCI{0.946}{0.857}{1.034} & \csCI{0.893}{0.882}{0.904} & \csCI{0.271}{0.251}{0.291} & \csCI{5.595}{5.523}{5.666} & \csCI{45.923}{44.645}{47.206} & \csCI{1.168}{1.143}{1.193} & \csCI{-85.962}{-86.204}{-85.720} \\
Excitation 1.5 & \csCI{73.607}{73.045}{74.168} & \csCI{0.842}{0.838}{0.846} & \csCI{1.222}{1.204}{1.241} & \csCI{5.127}{5.068}{5.187} & \csCI{38.647}{37.404}{39.906} & \csCI{1.881}{1.856}{1.905} & \csCI{-95.849}{-96.069}{-95.628} \\
\addlinespace[2pt]
Human cost 0.75 & \csCI{75.998}{75.463}{76.533} & \csCI{0.731}{0.726}{0.735} & \csCI{1.151}{1.135}{1.166} & \csCI{5.054}{4.996}{5.113} & \csCI{36.670}{35.441}{37.917} & \csCI{1.817}{1.793}{1.841} & \csCI{-89.677}{-89.868}{-89.487} \\
Human cost 1.25 & \csCI{13.716}{13.237}{14.196} & \csCI{0.973}{0.972}{0.974} & \csCI{1.013}{0.981}{1.045} & \csCI{6.202}{6.126}{6.277} & \csCI{57.446}{56.171}{58.712} & \csCI{1.297}{1.271}{1.323} & \csCI{-96.334}{-96.669}{-96.000} \\
\addlinespace[2pt]
Switch fee $\times0.5$ & \csCI{45.382}{44.635}{46.130} & \csCI{0.880}{0.877}{0.883} & \csCI{1.344}{1.319}{1.370} & \csCI{5.440}{5.378}{5.502} & \csCI{47.021}{45.741}{48.306} & \csCI{1.899}{1.874}{1.924} & \csCI{-93.720}{-93.978}{-93.462} \\
Switch fee $\times2$ & \csCI{43.028}{42.200}{43.856} & \csCI{0.853}{0.849}{0.857} & \csCI{0.907}{0.890}{0.924} & \csCI{5.533}{5.469}{5.597} & \csCI{48.633}{47.350}{49.918} & \csCI{2.109}{2.081}{2.137} & \csCI{-94.360}{-94.627}{-94.092} \\
\addlinespace[2pt]
A 0.5; cost 0.75 & \csCI{2.384}{2.172}{2.595} & \csCI{0.792}{0.781}{0.802} & \csCI{0.231}{0.214}{0.248} & \csCI{5.580}{5.508}{5.651} & \csCI{45.801}{44.523}{47.084} & \csCI{1.156}{1.131}{1.181} & \csCI{-85.839}{-86.078}{-85.599} \\
A 0.5; cost 1.25 & \csCI{0.450}{0.402}{0.499} & \csCI{0.753}{0.734}{0.772} & \csCI{0.230}{0.211}{0.249} & \csCI{5.606}{5.534}{5.678} & \csCI{45.996}{44.718}{47.279} & \csCI{1.148}{1.123}{1.173} & \csCI{-85.996}{-86.240}{-85.753} \\
A 1.5; cost 0.75 & \csCI{86.481}{86.165}{86.797} & \csCI{0.783}{0.779}{0.788} & \csCI{1.236}{1.217}{1.255} & \csCI{4.985}{4.927}{5.044} & \csCI{35.498}{34.278}{36.737} & \csCI{1.773}{1.749}{1.797} & \csCI{-90.862}{-91.045}{-90.679} \\
A 1.5; cost 1.25 & \csCI{64.031}{63.394}{64.668} & \csCI{0.914}{0.912}{0.916} & \csCI{1.679}{1.650}{1.709} & \csCI{5.261}{5.201}{5.321} & \csCI{42.578}{41.313}{43.853} & \csCI{1.906}{1.881}{1.930} & \csCI{-100.143}{-100.409}{-99.877} \\
\midrule
\multicolumn{8}{@{}l}{\textbf{Panel B: Human initial regime}} \\
\addlinespace[2pt]
\textbf{Baseline} & \csCI{58.424}{57.654}{59.194} & \csCI{0.809}{0.806}{0.812} & \csCI{2.013}{1.981}{2.044} & \csCI{5.283}{5.223}{5.343} & \csCI{43.384}{42.115}{44.661} & \csCI{1.879}{1.854}{1.903} & \csCI{-94.676}{-94.915}{-94.437} \\
\addlinespace[2pt]
Excitation 0.5 & \csCI{5.334}{5.190}{5.478} & \csCI{0.516}{0.507}{0.525} & \csCI{1.422}{1.397}{1.447} & \csCI{5.581}{5.509}{5.652} & \csCI{45.654}{44.377}{46.937} & \csCI{1.189}{1.164}{1.214} & \csCI{-87.328}{-87.570}{-87.085} \\
Excitation 1.5 & \csCI{98.061}{97.855}{98.266} & \csCI{0.696}{0.691}{0.700} & \csCI{0.222}{0.204}{0.240} & \csCI{4.918}{4.861}{4.975} & \csCI{33.813}{32.609}{35.040} & \csCI{2.023}{1.999}{2.047} & \csCI{-96.849}{-97.017}{-96.682} \\
\addlinespace[2pt]
Human cost 0.75 & \csCI{99.594}{99.510}{99.679} & \csCI{0.623}{0.619}{0.628} & \csCI{0.070}{0.060}{0.081} & \csCI{4.926}{4.868}{4.983} & \csCI{33.521}{32.318}{34.744} & \csCI{1.934}{1.911}{1.958} & \csCI{-89.746}{-89.910}{-89.581} \\
Human cost 1.25 & \csCI{19.065}{18.554}{19.577} & \csCI{0.862}{0.858}{0.866} & \csCI{2.088}{2.053}{2.122} & \csCI{6.130}{6.056}{6.205} & \csCI{56.470}{55.192}{57.739} & \csCI{1.333}{1.307}{1.359} & \csCI{-97.731}{-98.061}{-97.401} \\
\addlinespace[2pt]
Switch fee $\times0.5$ & \csCI{58.076}{57.288}{58.865} & \csCI{0.793}{0.790}{0.796} & \csCI{1.924}{1.894}{1.954} & \csCI{5.310}{5.250}{5.370} & \csCI{44.043}{42.771}{45.322} & \csCI{1.965}{1.940}{1.989} & \csCI{-94.649}{-94.892}{-94.407} \\
Switch fee $\times2$ & \csCI{82.461}{81.736}{83.186} & \csCI{0.693}{0.688}{0.697} & \csCI{0.659}{0.636}{0.683} & \csCI{5.087}{5.029}{5.145} & \csCI{38.330}{37.089}{39.587} & \csCI{2.513}{2.486}{2.540} & \csCI{-96.174}{-96.380}{-95.968} \\
\addlinespace[2pt]
A 0.5; cost 0.75 & \csCI{13.579}{13.168}{13.989} & \csCI{0.607}{0.601}{0.614} & \csCI{1.277}{1.258}{1.295} & \csCI{5.506}{5.437}{5.576} & \csCI{44.580}{43.307}{45.861} & \csCI{1.187}{1.162}{1.212} & \csCI{-86.750}{-86.983}{-86.518} \\
A 0.5; cost 1.25 & \csCI{4.214}{4.109}{4.319} & \csCI{0.417}{0.411}{0.424} & \csCI{1.409}{1.384}{1.433} & \csCI{5.597}{5.525}{5.669} & \csCI{45.850}{44.572}{47.133} & \csCI{1.154}{1.129}{1.179} & \csCI{-87.649}{-87.895}{-87.404} \\
A 1.5; cost 0.75 & \csCI{99.572}{99.499}{99.644} & \csCI{0.708}{0.704}{0.713} & \csCI{0.133}{0.118}{0.147} & \csCI{4.896}{4.839}{4.953} & \csCI{33.521}{32.318}{34.744} & \csCI{1.833}{1.809}{1.856} & \csCI{-90.453}{-90.617}{-90.289} \\
A 1.5; cost 1.25 & \csCI{74.429}{73.792}{75.066} & \csCI{0.826}{0.823}{0.829} & \csCI{2.014}{1.978}{2.050} & \csCI{5.160}{5.102}{5.218} & \csCI{40.234}{38.981}{41.501} & \csCI{1.980}{1.955}{2.004} & \csCI{-101.517}{-101.763}{-101.271} \\
\bottomrule
\end{tabular}%
}
\par\smallskip
\begin{minipage}{\textwidth}\fontsize{7}{8}\selectfont
\end{minipage}
\end{table}
\endgroup

\end{document}